\documentclass{article}

\usepackage{graphicx}
\usepackage{amsmath}
\usepackage{amssymb}
\usepackage{booktabs}
\usepackage[pagebackref=true,breaklinks,colorlinks,citecolor=blue,linkcolor=blue,urlcolor=blue,hypertexnames=false]{hyperref}
\usepackage{url}
\usepackage{xcolor}
\usepackage{amsthm}
\usepackage{thmtools}
\usepackage{thm-restate}
\usepackage{wrapfig}

\usepackage{enumitem}
\usepackage{subcaption}

\newtheorem{theorem}{Theorem}

\newtheorem{remark}[theorem]{Remark}
\theoremstyle{definition}
\newtheorem{definition}[theorem]{Definition}

\newcommand{\leo}[1]{#1}

\DeclareMathOperator{\SE}{SE}
\DeclareMathOperator{\se}{\mathfrak{se}}
\DeclareMathOperator{\so}{\mathfrak{so}}
\DeclareMathOperator{\diam}{diam}

\DeclareMathOperator{\rank}{rank}
\DeclareMathOperator{\id}{id}
\newcommand{\R}{\mathbb{R}}

\newcommand{\tightpara}[1]{\vspace{-2mm}\paragraph{#1}}

\usepackage[most]{tcolorbox} 

\newtcolorbox{taskbox}[1]{
    colback=gray!5,
    colframe=black,
    fonttitle=\bfseries,
    title=#1,
    arc=0mm, 
    boxrule=0.5pt
}

\usepackage[capitalize]{cleveref}

\usepackage[preprint]{neurips_2026}

\usepackage[utf8]{inputenc} 
\usepackage[T1]{fontenc}    
\usepackage{hyperref}       
\usepackage{url}            
\usepackage{booktabs}       
\usepackage{amsfonts}       
\usepackage{nicefrac}       
\usepackage{microtype}      
\usepackage{xcolor}         

\title{SeeSE3: Emergence of 3D Space in Vision Features}

\author{%
  Caroline Chen \\
  Google DeepMind \\
  \And
  Sayna Ebrahimi \\
  Google DeepMind \\
  \And
  Fedor Kitashov \\
  Google DeepMind \\
  \AND
  Ming-Hsuan Yang \\
  Google DeepMind \\
  \And
  Leonidas Guibas \\
  Google DeepMind \\
  \And
  Viorica P\u atr\u aucean \\
  Google DeepMind \\
  \And
  Maks Ovsjanikov \\
  Google DeepMind \\
}

\begin{document}

\maketitle

\begin{abstract}
In this paper, we ask whether vision foundation models construct representations that reflect the intrinsic properties of 3D Euclidean space. Unlike previous works that probe 3D awareness of vision features by regressing image-centric quantities such as depth or normals, we investigate the relation between the \textit{structure} of the space of visual features  and the group of Euclidean transformations $SE(3)$. We propose a set of probes to evaluate this relation from both topological and geometric perspectives: a mutual neighborhood metric that measures the alignment between feature neighborhoods and spatial \textit{topology}, and a Poincaré Adapter to test the linear accessibility of the \textit{geometry} of camera motion from latent displacements in static scenes. We show that self-supervised vision models, which, in principle, have not been trained with direct 3D supervision or active agency, possess latent subspaces that are remarkably strongly correlated with three-dimensional Euclidean space, when probed correctly. Building on this insight we propose a new class of ``Latent-Space Navigation'' techniques that perform visual odometry and localization purely in the latent space, bypassing the need for explicit 3D reconstruction.
\end{abstract}


\section{Introduction}
\label{sec:intro}

Learning to perceive the 3D structure of the world is integral to human development \cite{tolman1948cognitive,gibson2014ecological,shepard1971mental,piaget2013child,klatzky1998allocentric}. In contrast, in most existing vision, robotics or world models, the geometry of 3D space is imposed \textit{a priori}, either through an explicit choice of a coordinate system, e.g., \cite{schonberger2016structure,mildenhall2021nerf,savva2019habitat,zhang2025efficiently} or by distilling from human knowledge. One can therefore ask whether it is possible for a vision model to ``discover'' the 3D structure of the world with minimal intervention, e.g., purely from the statistics of visual data.

A prominent view in both cognitive science and philosophy is that to discover the geometry of Euclidean space, one must have \textit{agency}, e.g., \cite{berkeley1709essay,held1963movement,piaget2013child}. For example, Henri Poincaré hypothesized that ``A motionless being could never have acquired the concept of space because, he would have had no reason to distinguish [changes of position] from changes of state. Nor would he have been able to acquire it if his movements had not been \textit{voluntary}.'' (\cite{poincare1905science} Chap. 4).

Modern vision foundation models present an opportunity to test this hypothesis. These models are, in a strict sense, ``motionless beings.'' Even when trained on video, they are \textit{passive} observers of someone else's movement. Inspired by Poincaré, we thus formulate the following problem:

\vspace{-2mm}
\paragraph{The Poincaré Task:}
Can a motionless observer, given only passive visual input, ``discover''  the structure of 3D space? In particular, does a vision model naturally organize its latent space in a way that 
reflects the special Euclidean group $SE(3)$ of Euclidean transformations?


\leo{Note that the latent spaces of vision models associate  features with the \textit{content} of the scene, which is naturally present in 3D, while 3D \textit{space itself} is not directly observable.} Unlike prior works that either probe the 3D awareness of visual features by regressing \leo{local} \leo{visible-surface-centric} quantities \leo{such as depth and normals} \cite{elbanani2024probing}, or employ high-capacity decoders to extract pose \cite{teed2023deep,wang2024dust3r,wang2025vggt}, we test whether \textit{the feature space itself} is organized to reflect Euclidean 3D structure.

%
%
The key characteristics of Euclidean space are its \textit{low-dimensionality} (6D for position and camera orientation), and its \textit{homogeneity}, e.g., relative displacements have the same meaning regardless of the position in space. This is seemingly in contrast to vision features which are both high-dimensional and fundamentally tied to visual
\leo{content}. It is thus not obvious a priori whether such a low-dimensional, homogeneous coordinate system resides in the visual features of even the most powerful models.

\begin{figure}
    \centering
    \includegraphics[width=0.96\linewidth]{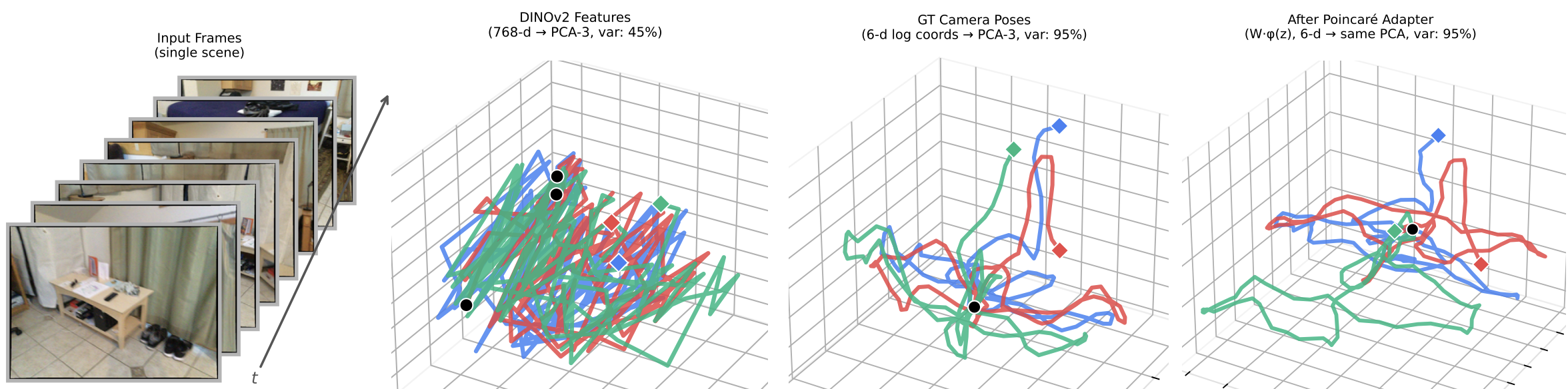}
    \caption{\textit{Overview:} we investigate whether vision features are aligned with the geometry of Euclidean transformations, as captured by motion (trajectories) within a static scene (left). While raw features are chaotic and tangled (middle left), we show that a lightweight network that we call the ``Poincaré adapter'' can identify the subspace of the feature space, associated with spatial motion, and unroll it to  closely follow the ground truth camera trajectories (middle right and right).\vspace{-3mm}}
    \label{fig:teaser}
\end{figure}

To answer this question, we define an appropriate  protocol and test several vision foundation models on static scenes taken from diverse datasets. Our experiments show that information contained in several purely self-supervised image models, when probed correctly, possesses a remarkably strong alignment with the spatial structures of $SE(3)$ --- so these models are able to ``see'' $SE(3)$. 
We build on this insight to propose efficient ways to regress camera motion and to enable closed-form latent visual navigation.
The key contributions of this work are:
\begin{enumerate}
    \item We formulate the problem of inferring the structure of 3D space from visual features, by comparing their structure to the group of Euclidean transformations $SE(3)$, and introduce several probes to evaluate the performance of a range of encoders with respect to this task.
    \item We show that while \textit{no} features are directly isomorphic to $SE(3)$ even locally, this structure can nevertheless be extracted using a lightweight Siamese decoder that we call the ``Poincaré adapter,'' which produces a homogeneous coordinate system.
    \item We provide a theoretical analysis of the conditions under which self-supervised methods can recover geometric structure, and investigate the \textit{difficulty} of recovering different components of $SE(3)$ --- for example, we show that \textit{rotation} is easier to encode than translation. We also establish the necessary conditions for the emergence of geometric structure through a large ablation comparing different variants of vision features trained on the same dataset.
    \item Building on this insight, we introduce a novel closed-form visual navigation approach, which exploits the linearization enabled by our Poincaré adapter.    
\end{enumerate}

Overall, our work investigates the conditions for the emergence of the 3D spatial structure within vision-based architectures. Furthermore, these insights are related to Visual Navigation Models \cite{sridhar2024nomad,doshi2024scaling,guerrier2026can} and Navigation World Models \cite{bar2025navigation,mur2026v} by identifying the geometry that underlies spatial reasoning and navigation in the latent space.

\section{Related Work}
\label{sec:related}

\textbf{Probing 3D Awareness.} 
Our work is closely related to the recent literature on probing the 3D awareness of vision foundation models. For example, Probe3D \cite{elbanani2024probing} proposed a protocol and evaluated a range of vision models on depth and surface normals, as well as multi-view feature consistency (keypoint matching). Similarly, You et al. \cite{you2024multiview} demonstrated that multi-view feature consistency is strongly correlated with improved performance on various downstream tasks, while Amir et al. \cite{amir2021deep} showed that self-supervised
ViT features serve as effective dense visual descriptors for keypoint
correspondence. Huang et al. \cite{huang2025much} evaluated a range of vision models by estimating multiple 3D properties from their features via shallow read-outs, while Man et al.~\cite{man2024lexicon3d} performed a similar analysis in the context of scenes, by considering both low-level geometric as well as semantic 3D tasks. Other works have probed for viewer-centric properties such as depth (e.g., \cite{bhattad2023stylegan,chen2023beyond} among many others). Our work is fundamentally
different from all these efforts, in that we focus on the underlying \textit{structure} of the vision features themselves, and specifically on the presence of a homogeneous \textit{spatial coordinate system} within them, which, we argue, is the hallmark of intrinsic geometric discovery.


\textbf{Explicit 3D Reconstruction and Visual Odometry.} 
The field of explicit 3D estimation has seen rapid progress with models like DUSt3R \cite{wang2024dust3r} and VGGT \cite{wang2025vggt}. These approaches treat 3D reconstruction as a sequence-to-sequence problem, employing heavy Transformer architectures to ``compute'' geometry from features. Similarly, classical Visual Odometry (VO) \cite{campos2021orb} explicitly solves for pose. However, these approaches typically employ high-capacity decoders, which might demonstrate that geometric information \textit{exists} within the features, but do not reveal whether visual features \textit{are organized} \leo{and structured} in a way that reflects Euclidean 3D space. 

Closely related to our goals is RUST \cite{sajjadi2023rust}, which demonstrated that a generative model trained on \textit{static scenes} naturally organizes its latent space into a geometric structure isomorphic to physical camera parameters. Our work broadens this by asking if discriminative models trained on \textit{arbitrary, 
\leo{diverse} data} (like ImageNet or YouTube) can \textit{also discover this geometric substructure} for static scenes. Interestingly, related recent work by Mitchel et al. \cite{mitchel2025true} shows that in the context of Novel View Synthesis, letting the network ``discover'' pose space through transferability, rather than imposing it a priori leads to better performance.

\textbf{Cognitive and Neuroscientific Foundations.}
Our investigation is also motivated by the work of H{\'e}naff et al. \cite{henaff2021primary}, who demonstrated that the primary visual cortex (V1) transforms curved video trajectories into ``straighter'' paths---a hypothesis recently extended to learned world models
\cite{maes2026leworldmodel} in the temporal domain. We ask whether vision foundation models perform a similar linearization to recover the 6D \textit{generators of rigid motion}. Our finding that a ``Visual Grid Code'' emerges in passive, motionless observers challenges the prevailing view that grid-cell-like representations require active motor agency for path integration~\cite{sorscher2023unified,dorrell2026if}.

\textbf{Visual Navigation Models and Navigation World Models.} 
Our work is also related to Visual Navigation Models (VNMs) \cite{shah2023gnm,shah2023vint,sridhar2024nomad,doshi2024scaling,ren2025prior,guerrier2026can} and Navigation World Models \cite{bar2025navigation,yao2025navmorph,mur2026v,maes2026leworldmodel}. The former typically take a source and target frame and output a set of actions to reach the target from the source without building a map, while the latter (NWMs) are typically tasked with predicting the next visual state or observation given a current state and a simulated action. Although our work does not directly deal with \textit{action} selection, these two problems are closely related to our pose linearization and latent space navigation (Section~\ref{sec:protocol} below). Furthermore, both VNMs and NWMs have so far been driven primarily by architectural and scaling advances. Instead, our  work aims to establish and improve the fundamental mechanisms that enable navigation and prediction in vision feature space.

\section{Background and Motivation}
\label{sec:background}
The overall objective of our work is to study the structure of vision features and to relate it to the group of  Euclidean transformations. To make the problem well-posed, we consider \textit{static scenes}, where the mapping from camera pose to an image (and thus feature) is stable and well-defined.

\tightpara{Problem Statement.}
\label{sec:problem_formulation}

Let $\mathcal{S}$ be a static scene observed by a moving camera across time $t$. An encoder $f_\theta$ maps the image $I_t$ at time $t$ to a high-dimensional feature vector $z_t = f_\theta(I_t)$. Our goal is to determine if these features reflect the structure of Euclidean transformations the camera undergoes. 

In particular, we consider image acquisition as a sampling process $\mathcal{U} \rightarrow \mathbb{R}^d$ from the camera pose space $\mathcal{U} \subseteq SE(3)$ to the feature space $\mathbb{R}^d$. For a  static scene, this process is defined via composition $P_t \rightarrow I_t \rightarrow z_t$, where $P_t \in \mathcal{U}$ is the camera-to-world pose at time $t$, $I_t$ is the image associated with that pose, and $z_t$ is the vision feature. After assembling the set of features  $\mathcal{Z} = \{z_t\}_{t=1..N}$ we then compare the properties of the point set  $\mathcal{Z}$ to the structure of $SE(3)$, as defined by the set $\{P_t\}$.

\tightpara{Topology Consistency and Submanifold Dimensionality.}
We first evaluate whether the \textit{topology} of the feature point set $\{z_t\}$ locally respects the topology of $SE(3)$ by comparing the nearest neighbors in the feature space to nearest neighbors in camera pose space (metric \textbf{M1} below). We then evaluate whether $\{z_t\}$ locally forms a 6D manifold (metric \textbf{M2} below).

\tightpara{Geometry -- Local Linearization and Lie Algebra.}
Most importantly, we are interested in evaluating whether $\{z_t\}$ (or a projection of it) captures the \textit{geometry} of $SE(3)$, or, more precisely, whether it is \textit{isomorphic} to $SE(3)$. Unfortunately, it is generally impossible to construct a global isomorphic representation of $SE(3)$ with additive operations in $\mathbb{R}^d$, regardless of the dimensionality~$d$ (see e.g.,~\cite{murray1994mathematical,zhou2019continuity}). We thus focus on recovering the structure of $SE(3)$ \textit{locally}.  %
 This aligns with the concept of ``straightening'' proposed by H{\'e}naff et al. \cite{henaff2021primary}, which suggests that the visual cortex transforms non-linear pixel-space trajectories into straighter paths. This behavior has been investigated in the context of \textit{temporal} straightening (e.g., \cite{parthasarathy2023self,niu2024learning,bagad2026chirality,maes2026leworldmodel}), whereas we ask if the ``straight'' paths in latent space correspond to the \textit{generators of rigid body motion}.

Formally, we let $P_t, P_{t+s} \in SE(3)$ be the camera-to-world pose matrices at frames $t$ and $t+s$. The relative transformation between them is given by $P_{rel} = P_t^{-1} P_{t+s}$. We map this relative pose to its tangent vector $\Delta P$ in the Lie Algebra $\mathfrak{se}(3)$ using the matrix logarithm:
\begin{equation}
\label{eq:body_displacement}
    \Delta P = (\text{Log}(P_t^{-1} P_{t+s}))^{\vee} \in \mathbb{R}^6\,.
\end{equation}
Here, $\Delta P$ is a 6-dimensional vector comprising 3 translational velocities and 3 rotational velocities (axis-angle), and $\vee$ is the vee operator, which extracts the 6D twist coordinates from the $4{\times}4$ matrix in $\mathfrak{se}(3)$. We argue that if the latent space is geometrically structured, there should exist a fixed and uniform (i.e., applicable for every point in space) linear operator $W$ such that:
\begin{equation}
    \Delta P \approx W(z_{t+s} - z_t).
\end{equation}
The requirement for $W$ to be linear ensures that the latent space \textit{itself} supports vector arithmetic isomorphic to the tangent space of $SE(3)$.

 \tightpara{Space Discovery.} To link this optimization problem to the original Poincaré task,  suppose $\Delta P = W (z_{t+s} - z_t)$. Since $W \in \mathbb{R}^{6 \times d}$, this implies that transitions in the feature space $z$ can be decomposed into a 6D row space of $W$ (``changes of position''). Crucially, the modifications within this subspace are homogeneous: i.e., the difference vectors $z_{t+s}-z_{t}$, after projection, can be captured by \textit{the same} set of 6 basis vectors, regardless of $t$. This means that, \textit{by isolating such a subspace} an agent can, in principle, discover the primary degrees of freedom in the 3D physical world, and those degrees of freedom have a consistent meaning irrespective of the visual content observed.

\section{Protocol}
\label{sec:protocol}

We evaluate a range of vision encoders on static scenes from ScanNet \cite{dai2017scannet}, ARKitScenes \cite{baruch2021arkitscenes}, TUM RGB-D \cite{sturm2012benchmark}, 12 Scenes \cite{valentin2016learning}, and 7 Scenes \cite{shotton2013scene} datasets. As mentioned above, we introduce several metrics of increasing complexity to probe spatial awareness of vision models.

\tightpara{Metric 1 (M1): Topological Alignment.} We use the mutual $k$-nn alignment metric \cite{huh2024platonic,zhu2026dynamic} to evaluate the similarity between neighborhoods in the visual feature and camera pose spaces, respectively. While previous works have used it to compare signals as captured by different encoders (vision vs. text), here we adapt this metric to compare vision encoders \textit{directly} to spatial transformations as captured by camera motion in a static scene. We describe our precise protocol in Appendix Sec.~\ref{appendix:mutual_knn}.

\tightpara{Metric 2 (M2): Intrinsic Dimensionality (ID).}
We estimate the Local Intrinsic Dimensionality using MLE \cite{levina2004maximum} and Two-NN \cite{facco2017estimating} estimators. An intrinsic dimensionality closer to 6 suggests a representation that potentially captures the underlying degrees of freedom.

Note that both metrics \textbf{M1} and \textbf{M2} are \textit{training-free} and directly evaluate the structural similarity between the feature space of different encoders and camera pose space. While the results depend on the choice of representation for camera pose, we found that the relative ordering of different vision encoders is remarkably robust across different representations.

\tightpara{Metric 3 (M3): Linear Equivariance.} 
We test if the feature space is \textit{globally} Euclidean. For this, we train a linear regressor $W$ to predict the relative pose change $\Delta P$ from the feature difference: $\Delta P \approx W (z_{t+s} - z_t)$ across frames separated by stride $s$. Success here implies the manifold is natively flat. The regression targets are standardized (zero mean, unit variance) per component to account for scale differences between translational and rotational units (see Appendix~\ref{app:implementation_details} for details).

\begin{figure}[t!]
    \centering
    \includegraphics[width=\linewidth]{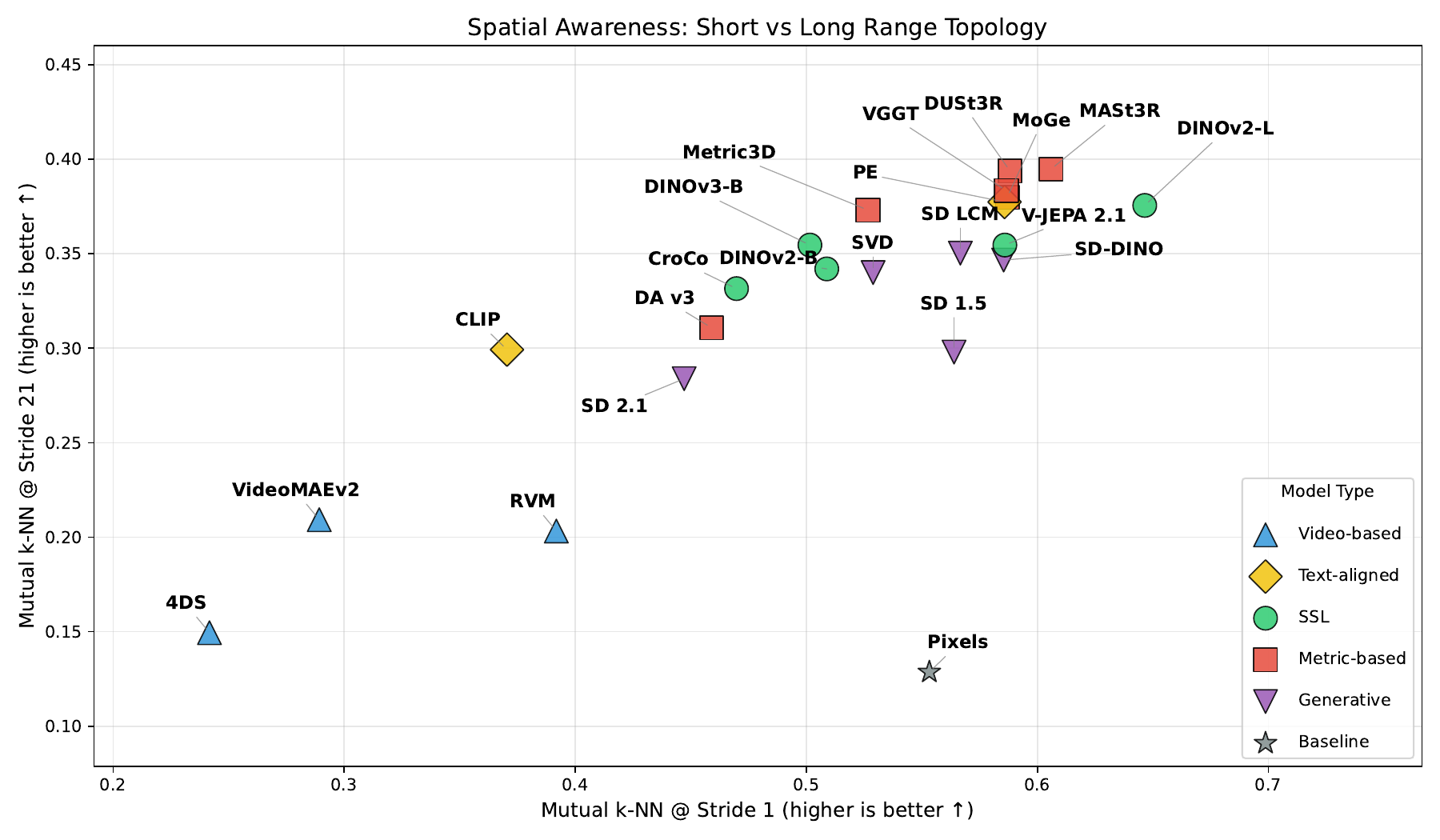}
    \vspace{-6mm}    
    \caption{\textbf{Mutual $k$-nn Alignment (M1).} Alignment between camera poses and vision features for short ($x$ axis) and long ($y$ axis) strides on ScanNet. The former captures pixel-level dependencies, while the latter is more indicative of spatial awareness. Explicit geometric models (DUSt3R, MoGe) achieve high alignment. Among self-supervised models, DINO-family encoders show emergent spatial topology, while video models (4DS, RVM) struggle to maintain a consistent spatial embedding.\vspace{-4mm}}
    \label{fig:stride_sweep}
\end{figure}

\tightpara{Metric 4 (M4): The Poincaré Adapter.} 
Since the metric \textbf{M3} above imposes a restrictive linear constraint, we introduce a lightweight, trainable adapter $\varphi_\theta$ (an MLP) in a Siamese manner:
\begin{equation}
    \Delta P \approx W (\varphi_\theta(z_{t+s}) - \varphi_\theta(z_t))
    \label{eq:pose_prediction}
\end{equation}
We call the network $\varphi_\theta$ a ``Poincaré Adapter,'' since its goal is to unroll the non-linear feature space to construct a homogeneous coordinate system, where changes of pose become linear. 
We emphasize that the adapter network $\varphi$ is applied \textit{independently} to any latent vector $z$, thus allowing a \textit{restructuring} of the latent space to reveal the $SE(3)$ geometry. Please see Figure \ref{fig:teaser} for a qualitative illustration. The full training and implementation details are provided in Appendix \ref{app:implementation_details}.
Although Eq.~\eqref{eq:pose_prediction} is our main formulation, we investigate several alternatives in Section~\ref{sec:analysis} and Appendix~\ref{app:alternatives}.


Following recent work (e.g., \cite{mason2026large}), which has shown that \textit{intermediate} layers of vision models tend to carry more geometric awareness, for all metrics \textbf{M1-M4}, we perform 
a layer sweep across the encoder blocks to achieve the best performance.

\subsection{Application: Latent Space Navigation}
\label{sec:navigation_protocol}

As a proof of concept enabled by our study, we apply our approach to \textbf{Latent Space Navigation}. The key idea is that, if the visual latent space can be structured to reflect the local topology of $SE(3)$, then an agent should be able to ``imagine'' the visual consequence of a physical movement without requiring explicit 3D reconstruction. This objective relates to both latent novel view synthesis \cite{mildenhall2021nerf,metzer2023latentnerf}, and, more closely, to Navigation World Models \cite{bar2025navigation,mur2026v}. Unlike both of these frameworks, our goal is not to synthesize views,  however, but rather to show that navigation can be greatly simplified (reduced to vector arithmetic) after identifying the appropriate camera-aligned \textit{linear feature subspace}.

Formally, given the adapted feature $g_t = \varphi_\theta(z_t)$ of a starting frame and a desired physical displacement (pose change) $\Delta P$, our goal is to predict the feature representation of the destination frame, $\hat{g}_{t+s}$ by inverting the Poincaré Adapter. We evaluate a completely training-free navigator (which we call Inverse Poincaré). Since our adapter projects features such that $\Delta P \approx W(g_{t+s} - g_t)$, (where $g_t = \varphi_{\theta}(z_t)$ as in Eq.~\eqref{eq:pose_prediction}), we can linearly invert this relationship:
\begin{equation}
    \hat{g}_{t+s} = g_t + W^\dagger \Delta P\,,
\end{equation}
where $W^\dagger$ is the pseudo-inverse of the linear projection weights. We compare this approach to parametric correctors (e.g., Linear, MLP, and Attention layers) trained to predict $\hat{g}_{t+s}$ from the concatenated input $[g_t; \Delta P]$. To give the networks a strong geometric foundation, the higher-capacity models (MLP and Attention) are constructed to predict a non-linear \textit{residual} correction on top of the baseline prediction (i.e., $\hat{g}_{t+s} = g_t + W^\dagger \Delta P + \text{Net}([g_t; \Delta P])$).

\begin{table}[t!]
\centering
\small
\resizebox{\linewidth}{!}{
\begin{tabular}{lcccccccc}
\toprule
\textbf{Metric} & \textbf{Pixels} & \textbf{CroCo}~\cite{weinzaepfel2022croco} & \textbf{DINOv2-B}~\cite{oquab2023dinov2} & \textbf{DINOv3-B}~\cite{simeoni2025dinov3} & \textbf{DUSt3R}~\cite{wang2024dust3r} & \textbf{V-JEPA}~\cite{mur2026v} & \textbf{VGGT-L}~\cite{wang2025vggt} \\
\midrule
ID (TwoNN) & 5.01 & 7.66 & 9.15 & 10.76 & 7.16 & 9.55 & 7.74 \\
ID (MLE) & 4.03 & 5.92 & 6.41 & 7.88 & 4.40 & 5.13 & 5.36 \\
$R^2$ (\textbf{M3}) & -5.29 & -0.42 & -0.28 & -0.26 & -0.22 & 0.09 & -0.09 \\
\bottomrule
\end{tabular}
}
\caption{\textbf{Intrinsic Dimension \& Linear Equivariance.} Representative models compared on intrinsic dimension (TwoNN and MLE) and raw linear readout $R^2$ (\textbf{M3}). Values correspond to the layer maximizing \textbf{M3} performance. We provide the full table with 18 different models in Appendix~\ref{app:full_intrinsic_dim}.\vspace{-5mm}}
\label{tab:intrinsic_dim}
\end{table}


To evaluate navigation success, we perform a nearest-neighbor retrieval task. We query the test set of unseen scene frames using the predicted $\hat{g}_{t+s}$ via $L_2$ distance in the adapted feature space, and measure \textbf{Pose-Error Hits} (Hit@$\epsilon$). For each frame, we compute the $L_2$ distance between the raw 6D twist vectors (3D translation in meters + 3D axis-angle rotation in radians) of the target and retrieved relative poses. As this distance mixes translation and rotation, the thresholds $\epsilon \in \{0.1, 0.2, 0.3, 0.5\}$ correspond approximately to a pure translation error of $\epsilon$\,m, a pure rotation error of $\epsilon$\,rad, or some combination thereof. Hit@$\epsilon$ reports the fraction of retrievals for which this distance is below $\epsilon$.

\section{Main Results}
\label{sec:results}

We first evaluate a diverse set of vision foundation models on static scenes from ScanNet~\cite{dai2017scannet}. Our evaluation suite includes: patchified Raw Pixels (baseline), CLIP~\cite{radford2021learning}, DINOv2~\cite{oquab2023dinov2}, DINOv3~\cite{simeoni2025dinov3}, CroCo~\cite{weinzaepfel2022croco} (3D-pretext SSL), V-JEPA 2.1~\citep{mur2026v}, against explicit geometric foundation models DUSt3R~\cite{wang2024dust3r} and MoGe~\cite{wang2025moge}. We also evaluated Depth Anything v3 \cite{lin2025da3}, Perception Encoder \cite{bolya2025perception}, Metric3d v2 \cite{hu2024metric3d}, and generative models: Stable Diffusion SD 1.5 DDIM and SD 2.1 DDIM features, \citep{rombach2022high,song2020denoising},
LCM DDIM~\citep{luo2023latent,rombach2022high,song2020denoising},
SD-DINO VAE and SD-DINO UNet features~\citep{zhang2023tale},
Stable Video Diffusion (SVD)~\citep{blattmann2023stable},
as well as several video models: VideoMAEv2 \cite{wang2023videomae}, RVM (Recurrent Video Masked autoencoders) \cite{zoran2025rvm} and the base variant from the 4DS family \cite{carreira2024scaling}. 


\tightpara{Topological Alignment (Metric M1).}
We first evaluate if the neighborhood structure of the visual latent space mirrors the neighborhood structure of the physical world. We use the mutual $k$-nn alignment score described in Appendix~\ref{appendix:mutual_knn}, across both short and long temporal strides. A higher stride introduces larger viewpoint changes, reducing pixel overlap and testing global spatial awareness.

As shown in Figure~\ref{fig:stride_sweep}, models explicitly trained with geometric objectives (DUSt3R, MoGe, MASt3R) achieve the highest alignment scores ($\approx 0.35-0.40$) at high strides. This highlights that their representations map visual inputs to a metric-aligned coordinate system. Interestingly, although most of these methods require \textit{a decoder} to regress camera pose, our results show that the \textit{encoder itself} tends to structure the latent space to align with the structure of the pose space.

Perhaps more surprisingly, among purely self-supervised vision models, DINOv2 and 
DINOv3 exhibit remarkably strong topological alignment ($\approx 0.38$), significantly outperforming other baselines and Raw Pixels. Crucially, this structure emerges purely from passive image-level self-supervision, without explicit depth or pose signals. We also note that video models (VideoMAE, RVM, 4DS) perform less successfully on this metric ($\approx 0.15-0.25$). We hypothesize that while these models capture temporal \textit{motion}, their latent states are highly context-dependent (entangled with the generative trajectory) and fail to form a stable, globally consistent spatial map of the static environment. 

\tightpara{Geometry of Raw Features (Metrics M2 \& M3).}
We estimate the Local Intrinsic Dimensionality (ID) using both Two-NN and MLE estimators. Theoretically, a perfectly space-aligned representation of a static scene viewed by a moving camera should have an ID of $\approx 6$.
%
As shown in Table~\ref{tab:intrinsic_dim}, most encoders result in features with intrinsic dimensionality around 6 (we justify this effect theoretically in Section \ref{sec:analysis}). We observe that DINOv2 and DINOv3 exhibit higher IDs ($\approx 5.8 - 10.0$), reflecting a richer semantic manifold that has expanded beyond simple pixel correlations. CroCo, which is pre-trained with explicit 3D cross-view completion, shows a lower ID ($\approx 5.9$, MLE) closer to the physical degrees of freedom.

We then test if physical displacements $\Delta P$ can be recovered via a global linear projection $\Delta P \approx W(z_{t+s} - z_t)$. As reported in Table~\ref{tab:intrinsic_dim}, \textbf{models generally fail this test}, yielding negative or near-zero $R^2$ scores (e.g., DINOv2 $R^2 \approx -0.28$, Pixels $R^2\approx-5.29$). This confirms that the ``Manifold of Vision'' is natively curved; standard vector arithmetic on raw features does not correspond to physical motion. We provide a full comparison across a wide range of models in Appendix~\ref{app:full_intrinsic_dim}.

\tightpara{Emergence of Linear Structure (Metric M4).} As shown above, Metrics \textbf{M2} and \textbf{M3} do not tend to differentiate vision encoders because they are, respectively, too easy (most models have ID close to 6) and too hard (no model possesses an easily linearizable subspace aligned with motion). We thus focus on the Poincaré Adapter described in Eq.~\eqref{eq:pose_prediction}. Example input frame pairs (Figure~\ref{fig:scannet_examples_main}) illustrate the typical viewpoint change across a stride of $s{=}40$ frames in ScanNet~\cite{dai2017scannet}.

\begin{figure}[t!]
    \centering
    \begin{tabular}{@{}c@{\hspace{2mm}}c@{}}
        \includegraphics[width=0.48\linewidth]{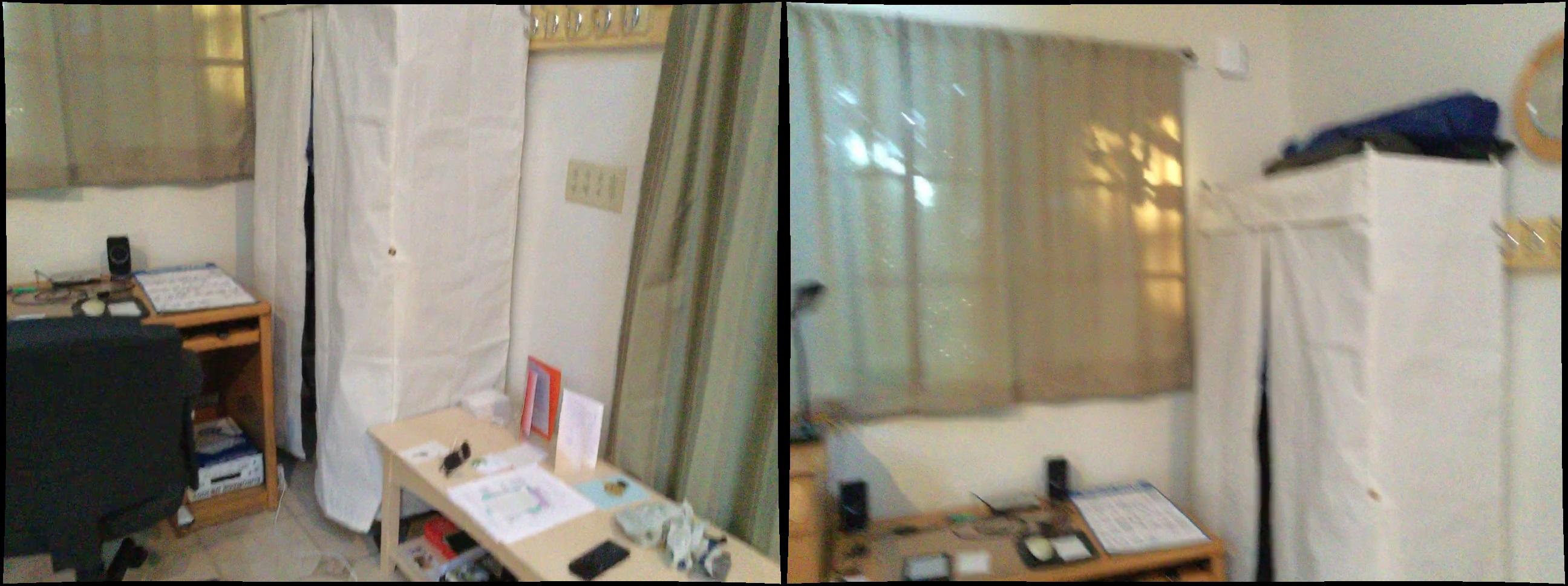} &
        \includegraphics[width=0.48\linewidth]{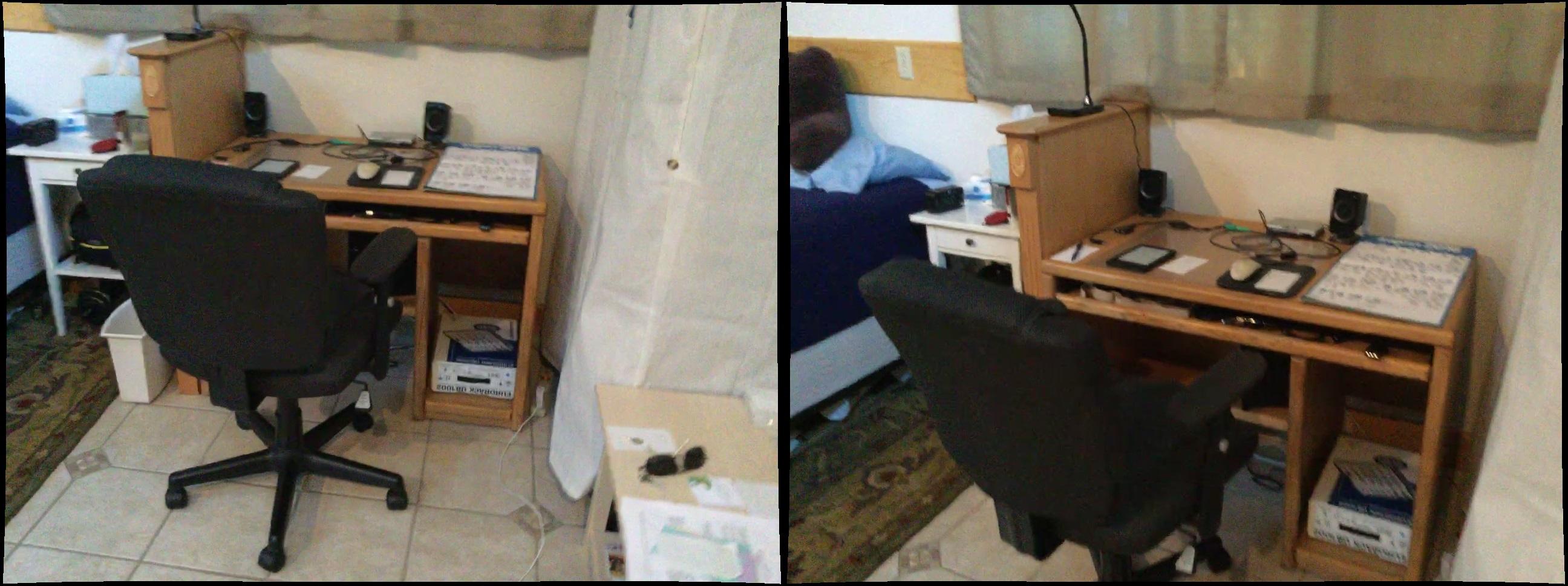} 
    \end{tabular}
    \caption{\textbf{Example frame pairs from ScanNet.} The frames within each of these two pairs are separated by a stride of $s=40$ frames, illustrating the typical viewpoint change that the Poincar\'e adapter must decode.\vspace{-5mm}}
    \label{fig:scannet_examples_main}
\end{figure}
\begin{figure}[t!]
    \centering
        \centering
        \includegraphics[width=0.9\linewidth]{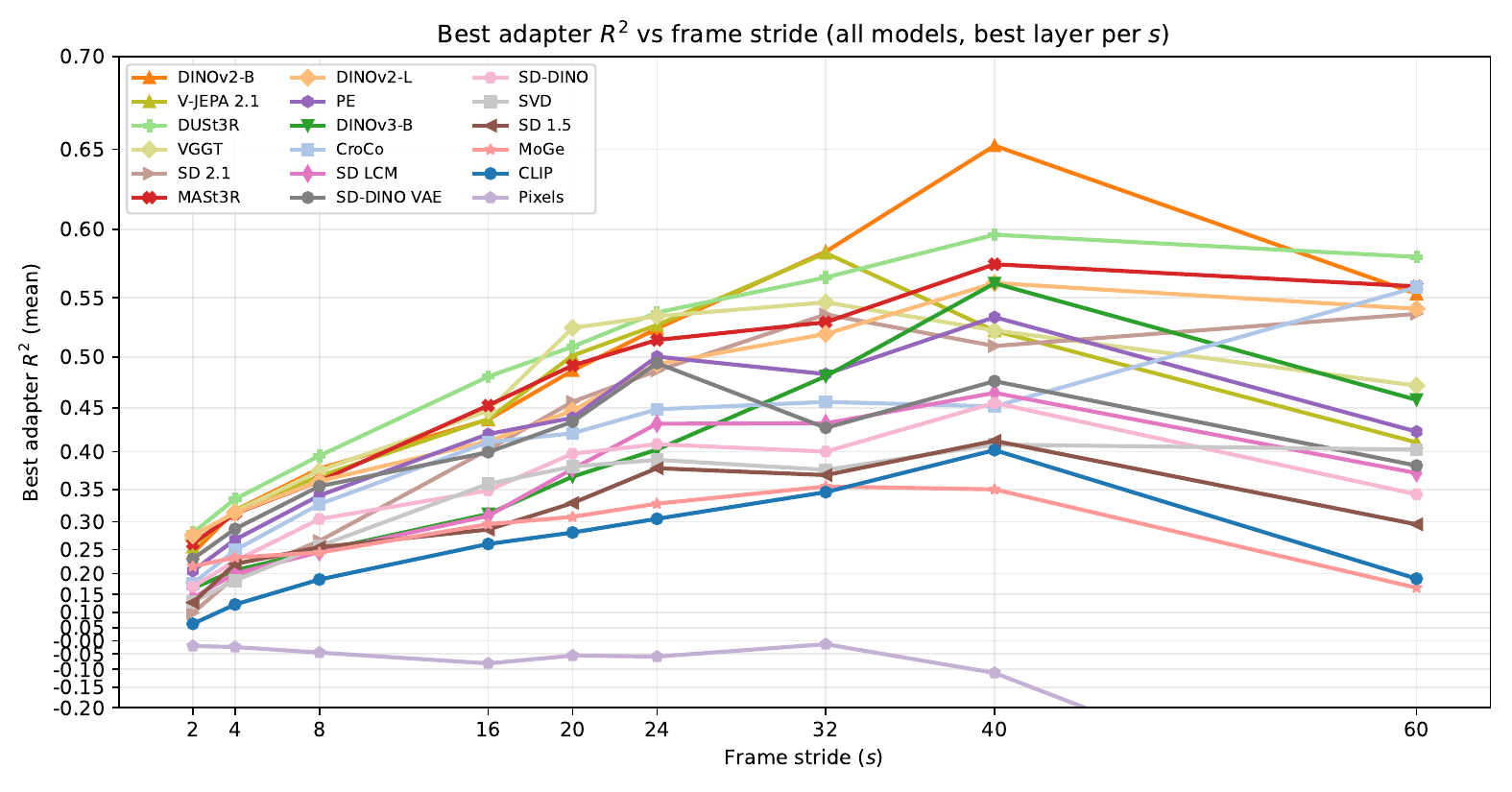}
    \vspace{-3mm}
    \caption{\textbf{Poincar\'e Adapter (M4).}
    ~Quantitative evaluation across a range of encoders and frame strides $s$. We evaluate whether changes in the latent features as observed across $s$ frames can be linearly mapped to changes in camera pose.\vspace{-5mm}}
    \label{fig:adapter_b} 
\end{figure}


Remarkably, by applying a lightweight Siamese MLP adapter $\varphi$, we recover strong linear equivariance. As shown in Figure~\ref{fig:adapter_b}, DINOv2 achieves strong performance, reaching test set $R^2 \approx 0.65$, averaged over 3 scenes from ScanNet. This implies that the latent space contains a locally Euclidean sub-manifold isomorphic to 
\leo{$SE(3)$}---effectively a ``Visual Grid Code''---which can be accessed via a non-linear projection. 
Notably, \textbf{CroCo}, despite being trained with explicit 3D tasks, underperforms DINOv2 ($R^2 \approx 0.38$), suggesting that massive passive observation (DINOv2's scale) may be more effective for discovering spatial laws than smaller-scale explicit 3D pre-training.

We also note that \textit{topological} alignment already emerges without any training (\textbf{M1}), and that raw pixels remain strongly negative $R^2$ \textit{even with the adapter} (Figure~\ref{fig:adapter_b}), confirming that the decodable structure arises  in and is dependent on the properties of learned features.

To provide a complete picture of geometric decodability across different environments, we summarize the results of 5 different datasets in Table~\ref{tab:app_dataset_summary}. We compare the three representative foundation models (DINOv2, V-JEPA, DUSt3R)  and average decodability metrics within each dataset. Overall, we note that V-JEPA 2.1~\cite{mur2026v} and DUSt3R~\cite{wang2024dust3r} possess, on average, the strongest geometric awareness. We also remark that the difficulty of decoding the spatial structure is highly dependent on the environment, and present a scene difficulty analysis in Sec.~\ref{sec:difficulty}.

\begin{table}[t!]
\centering
\resizebox{\textwidth}{!}{
\begin{tabular}{l r | rrr | rrr | rrr}
\toprule
\textbf{Dataset} & \textbf{Scenes} & \multicolumn{3}{c|}{\textbf{V-JEPA 2.1}~\cite{mur2026v}} & \multicolumn{3}{c|}{\textbf{DUSt3R}~\cite{wang2024dust3r}} & \multicolumn{3}{c}{\textbf{DINOv2}~\cite{oquab2023dinov2}} \\
& & \textbf{Max $R^2$} & \textbf{$R^2 > 0$} & \textbf{$R^2 > 0.3$} & \textbf{Max $R^2$} & \textbf{$R^2 > 0$} & \textbf{$R^2 > 0.3$} & \textbf{Max $R^2$} & \textbf{$R^2 > 0$} & \textbf{$R^2 > 0.3$} \\
\midrule
\textbf{12-Scenes}~\cite{valentin2016learning} & 10 & \textbf{0.803} & \textbf{100\%} & 70\% & 0.789 & 90\% & \textbf{80\%} & 0.682 & 80\% & 50\% \\
\textbf{ScanNet}~\cite{dai2017scannet} & 707 & \textbf{0.735} & 35\% & 4\% & 0.711 & \textbf{37\%} & \textbf{6\%} & 0.673 & 31\% & 3\% \\
\textbf{ARKitScenes}~\cite{baruch2021arkitscenes} & 133 & 0.344 & \textbf{51\%} & 3\% & \textbf{0.412} & 50\% & \textbf{4\%} & 0.339 & 47\% & 1\% \\
\textbf{TUM RGB-D}~\cite{sturm2012benchmark} & 17 & 0.337 & \textbf{41\%} & 12\% & \textbf{0.435} & \textbf{41\%} & \textbf{18\%} & 0.348 & 35\% & 12\% \\
\textbf{7-Scenes}~\cite{shotton2013scene} & 46 & 0.254 & 4\% & 0\% & 0.320 & \textbf{7\%} & \textbf{2\%} & \textbf{0.331} & 2\% & \textbf{2\%} \\
\bottomrule
\end{tabular}
}
\vspace{2mm}
\caption{\textbf{Overall Dataset Success Rates (M4).} Aggregate performance metrics across all evaluated rooms and representative model backbones. The results highlight that datasets with clean bundle-adjusted poses and dense captures (e.g., 12-Scenes) achieve near-perfect decodability, whereas sparse or noisy datasets (e.g., single-scan ARKitScenes) exhibit much lower success rates.\vspace{-4mm}}
\label{tab:app_dataset_summary}
\end{table}
\begin{figure}[t!]
    \centering
    \includegraphics[width=0.9\linewidth]{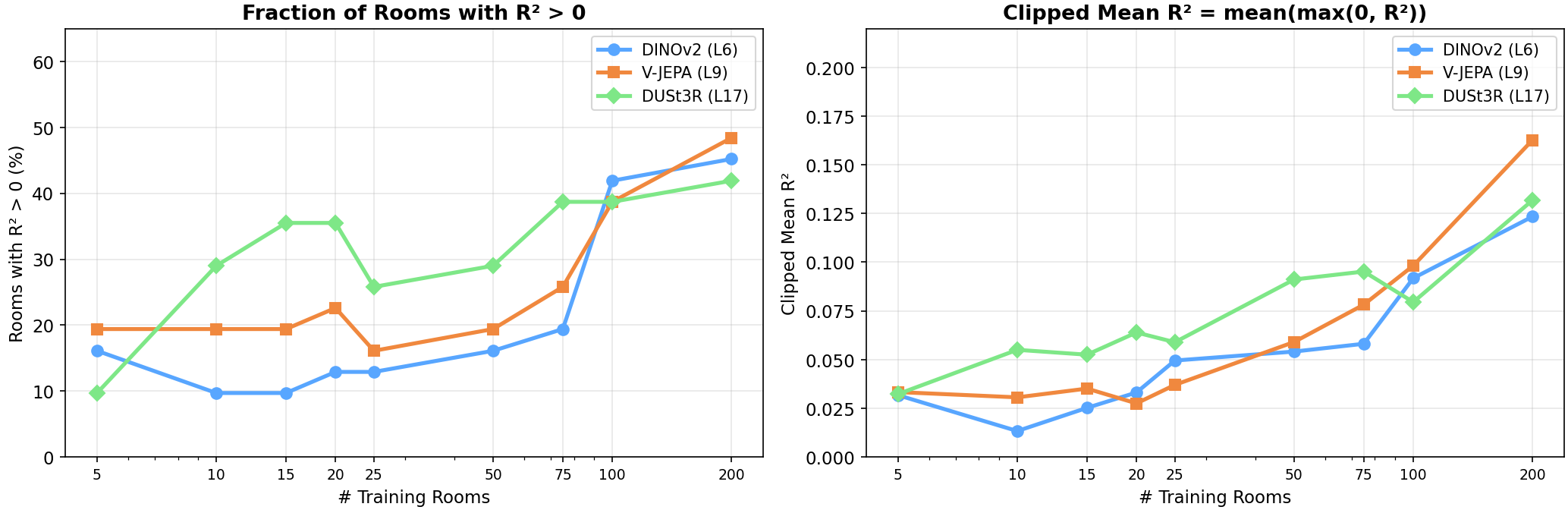}
    \vspace{-1mm}
    \caption{\textbf{Generalization and Scaling (M4).} Evaluation of zero-shot pose recovery on 31 unseen test rooms as a function of the number of training rooms ($N$). \textbf{Left:} The fraction of successfully recovered rooms (Top-$R^2 > 0$) increases monotonically. \textbf{Right:} The total recoverable signal (Clipped Mean $R^2$) grows substantially without plateauing.\vspace{-3mm}}
    \label{fig:scaling_curves}
\end{figure}

\tightpara{Generalization and Scaling.} To understand how geometric awareness generalizes, we trained Poincaré adapters on varying numbers of ScanNet rooms ($N \in \{5, 10, 15, \dots, 200\}$) and evaluated them zero-shot on a fixed set of 31 held-out rooms. As shown in Figure~\ref{fig:scaling_curves}, generalization improves monotonically with scale across multiple vision foundation models (DINOv2, V-JEPA, DUSt3R) despite their different pre-training objectives. For example, the fraction of rooms where the zero-shot adapter recovers a positive $R^2$ signal grows from roughly 18\% at $N=5$ to 50\% at $N=200$, while the overall recoverable signal increases accordingly. Notably, these scaling curves do not plateau at $N=200$, confirming that geometric awareness is a scalable, universal emergent property of vision transformers that transfers to unseen environments.


\subsection{Latent Space Navigation}
\label{sec:navigation_results}

Building upon the linear structure revealed by the Poincaré Adapter, we test our Latent Space Navigation proof-of-concept using the intermediate features of DINOv2 (Layer 7) and the explicitly geometric DUSt3R (Layer 16). The results, averaged across multiple scenes, are summarized in \cref{tab:navigation_results}. We include an \textit{Identity} baseline that simply returns the source frame's feature ($\hat{g}_{t+s} = g_t$, i.e., no navigation at all) to calibrate the difficulty of the task. We compare the training-free ``Inv.\ Poincar\'e'' approach---which simply adds the pseudo-inverse scaled pose displacement to the current feature ($g_t + W^\dagger \Delta P$)---against a learned Attention corrector for both backbones. As shown in Table~\ref{tab:navigation_results}, the zero-shot Inv.\ Poincar\'e approach yields highly competitive retrieval, significantly outperforming the static Identity baseline at all scales. 
While the Attention corrector achieves better regression metrics ($R^2$), it lags behind Inv.\ Poincar\'e in physical region retrieval, suggesting that simple vector arithmetic is sufficient to move the observer to the correct spatial neighborhood. A full comparison including additional correctors is provided in Appendix Table~\ref{tab:navigation_results_full}.

\begin{table}[t]
\centering
\small
{
\resizebox{\linewidth}{!}{
\begin{tabular}{llccccccc}
\toprule
\textbf{Backbone} & \textbf{Method} & \textbf{MSE ($\downarrow$)} & \textbf{$R^2$ ($\uparrow$)} & \textbf{Top-1 ($\uparrow$)} & \textbf{Hit@0.1 ($\uparrow$)} & \textbf{Hit@0.2 ($\uparrow$)} & \textbf{Hit@0.3 ($\uparrow$)} & \textbf{Hit@0.5 ($\uparrow$)} \\
\midrule
DINOv2 & Identity & 0.578 & $-$0.323 & 0.000 & 0.090 & 0.204 & 0.302 & 0.523 \\
DINOv2 & Inv.\ Poincar\'e & 0.350 & 0.199 & \textbf{0.041} & \textbf{0.252} & \textbf{0.479} & \textbf{0.628} & \textbf{0.741} \\
DINOv2 & + Attention & 0.258 & 0.407 & 0.018 & 0.167 & 0.323 & 0.383 & 0.422 \\
\midrule
DUSt3R & Inv.\ Poincar\'e & 0.226 & 0.155 & 0.030 & 0.216 & 0.346 & 0.467 & 0.629 \\
DUSt3R & + Attention & \textbf{0.108} & \textbf{0.597} & 0.024 & 0.201 & 0.357 & 0.455 & 0.535 \\
\bottomrule
\end{tabular}
}
}
\vspace{1mm}
\caption{{\textbf{Latent Space Navigation Results (Averaged over 3 scenes).} The Identity baseline (no navigation) calibrates task difficulty. The zero-shot Inv.\ Poincar\'e approach dominates in topological retrieval (Hit@$\epsilon$) despite the Attention corrector achieving better regression metrics ($R^2$). \vspace{-5mm}}}
\label{tab:navigation_results}
\end{table}


\textbf{Cross-Room Generalization.} We extend our navigation evaluation to 31 unseen test rooms to test generalizability. While the zero-shot pseudo-inverse ($W^\dagger$) correctly retrieves frames better than Identity in roughly 60\% of rooms, adapting it using a Ridge-Refit (fitting only the \textit{linear} mapping component of $\Delta P \rightarrow \Delta g$ on the test room, given a frozen adapter) pushes navigation success to 90--94\% across DINOv2, DUSt3R, and V-JEPA backbones. This demonstrates that the adapter discovers a geometrically structured manifold that is robust even across entirely different environments. 

We also note that this linear structure allows for \textbf{Multi-Step Navigation} in the latent space. Given a source and a target frame from an unseen test room, we split the desired pose displacement $\Delta P$ into $S$ equal sub-steps and iteratively apply the adapter in an open-loop fashion to generate intermediate latent features, without generating any pixels. We then retrieve the nearest neighbor frame from the test set. We note that for moderate displacements (e.g., $0.68\text{m}, 19^\circ$) in the 12 Scenes dataset~\cite{valentin2016learning}, the 4-step retrieved trajectory reaches the exact target frame. Even for extreme displacements (e.g., $2.45\text{m}, 37^\circ$), the path remains semantically coherent and rotation recovery remains highly precise ($<5^\circ$ error), although translation drift accumulates. Figure~\ref{fig:multi_step_nav_main} shows a representative trajectory, and further examples are provided in Appendix~\ref{app:multi_step_nav}. This confirms the feasibility of open-loop latent planning over extended paths. We also explore extending the single-chart adapter to a Mixture-of-Experts (MoE) architecture, described in Appendix~\ref{app:moe_architecture}.

\begin{figure}[t!]
    \centering
    \includegraphics[width=\linewidth]{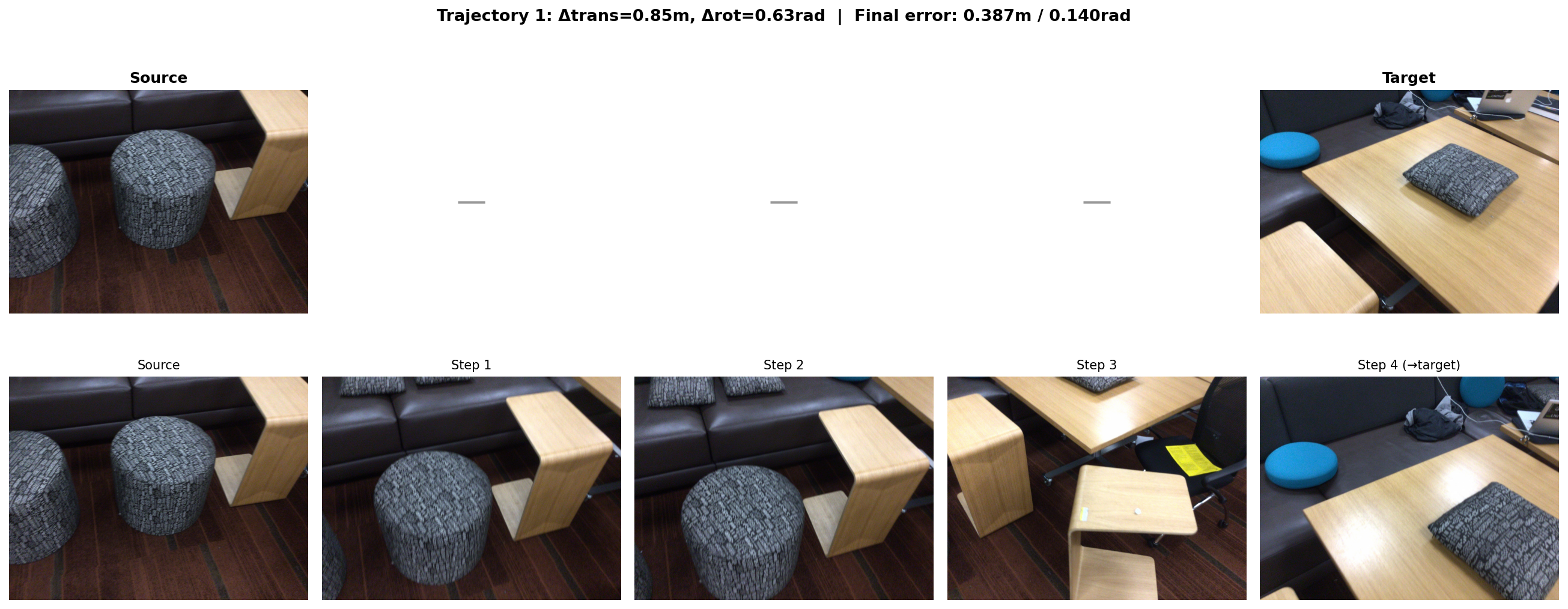}
    \caption{{\textbf{Multi-Step Latent Navigation.} A 4-step open-loop trajectory ($\Delta P = 0.85\text{m}, 36^\circ$). The path drifts slightly due to the open-loop integration, but the retrieved frames remain visually coherent and successfully approach the target view.\vspace{-5mm}}}
    \label{fig:multi_step_nav_main}
\end{figure}

\section{Analysis}
\label{sec:analysis}
Going back to the original Poincaré question formulated in the
introduction, our findings suggest that the answer is: 
\textit{Yes, a motionless observer can discover space, but only up to a nonlinear \leo{un}wrapping that a simple adapter can undo.}  In this section we provide a
more in-depth analysis of both the theoretical underpinnings of this
behavior and the practical conditions under which it arises. Full
derivations, proofs, and detailed tables are deferred to the Appendix.
\tightpara{Probe design is important.}
We compare the Poincar\'e adapter against four alternative formulations
for pose recovery from frozen DINOv2-B features
(Appendix~\ref{app:alternatives}, Table~\ref{tab:alternatives}).
Three design choices prove to be important. First, \textit{Lie algebra linearization}: predicting the 6D twist vector via a linear readout~$W$ outperforms $SE(3)$ matrix regression ($R^2 = 0.61$ vs.\ $-0.43$), confirming that the adapted feature differences are naturally aligned
with the tangent space of $SE(3)$. Second, predicting absolute poses fails entirely ($R^2 = -1.75$), indicating that DINOv2 encodes
\textit{relational} structure between views, not scene-specific
locations. Lastly, \textit{Siamese structure}:
applying the adapter independently to each feature before subtraction
($\varphi(z_2) - \varphi(z_1)$) outperforms operating on the raw
difference ($\varphi(z_2 - z_1)$) by $\Delta R^2 = +0.13$, with
translation magnitude collapsing from $+0.60$ to $-0.08$ without it.
The Siamese design enforces \textit{homogeneity}: each feature is
normalized into a common coordinate frame before comparison. 
\tightpara{Why does the adapter work?}
In the Appendix Sec.~\ref{subsec:theory}, we provide a formal analysis of the conditions under
which the Poincar\'e adapter recovers SE(3) displacements. We show
(Theorem~\ref{thm:local}) that any smooth encoder whose Jacobian has
rank~6 admits a local linear readout. We then analyze \textit{global} decodability: when does a single~$W$ work across all poses? We show (Theorem~\ref{thm:global}a--b) that the
error is controlled by the \textit{curvature} of the feature manifold, as measured by the variation of the feature encoder's Jacobian across poses.
A nonlinear adapter $\varphi$ removes the feature-manifold curvature
entirely by ``unrolling'' the curved manifold into a flat coordinate
patch (Theorem~\ref{thm:global}c); a residual from the
non-commutativity of $\SE(3)$ remains, whose rotational component
vanishes for pure translational displacements and which is small
when the angular extent of the region is modest---consistent with the
observed $R^2 \approx 0.61$ where raw linear decoding fails ($R^2 < 0$).
We note that \textit{on the training} set, $R^2$ exceeds $0.9$, suggesting that the remaining gap is due to generalization rather than a structural ceiling of the problem formulation.

We also study the relative difficulty of recovering different $SE(3)$ components
by decomposing the Poincar\'e adapter's output into rotation (3~DoF),
translation direction (2~DoF), and translation magnitude (1~DoF).
Across a sweep of 3{,}328 adapter configurations (varying bottleneck
dimension, learning rate, and loss type) evaluated over multiple random
seeds on frozen DINOv2-B features, a consistent pattern emerges (detailed in Appendix~\ref{sec:components_difficulty}):
rotation is reliably decodable (positive $R^2$ in 88\% of
configurations), while translation magnitude is not (positive in only
27\%, with $6\times$ the seed standard deviation).  With a well-tuned adapter,
all components achieve comparable $R^2$
(Table~\ref{tab:ablation}: $0.63$ trans, $0.59$ rot for pretrained
DINOv2-B), but translation magnitude is far more sensitive to hyperparameters and
initialization. This asymmetry has a geometric origin
(Theorem~\ref{thm:rot-trans}): rotational optical flow depends only on
pixel coordinates and focal length, while translational flow scales
as~$1/Z$ with scene depth---introducing an additional source of
Jacobian variation that makes translation harder to decode reliably.
Translation magnitude is further confounded with depth via the rescaling
invariance $(v, Z) \mapsto (\lambda v, \lambda Z)$, making $\|v\|$
unrecoverable without metric priors.

\tightpara{Model training -- topology is stable; geometry is fragile.}
To identify the necessary training conditions, we train 25 DINOv2-B
checkpoints \textit{from scratch} on frames taken from a single walking-tour video (using the setup introduced in \cite{UniqueLives2025}). We perform systematic ablations across the main DINOv2 hyperparameters to investigate the necessary conditions for the emergence of geometric structure in the feature extractor (Table~\ref{tab:ablation}). 
We make several observations. First, mutual $k$-nn alignment (metric \textbf{M1}) is stable across recipes (M-$k$-nn~$\in [0.35, 0.42]$), while Poincar\'e $R^2$ ranges from $0.08$ to $0.47$---even four identical runs yield $R^2 \in [0.08, 0.35]$ with M-$k$-nn~$= 0.38 \pm 0.01$. The sole ablation that degrades
\textit{topology} is removing DINO self-distillation (M-$k$-nn drops to
$0.24$); reducing batch size to 128 collapses manifold formation
entirely (M-$k$-nn$\,=\,0.15$). Data scale is the dominant factor for
\textit{geometry}: the pretrained model ($R^2 = 0.61$, 142M images)
outperforms the best single-video checkpoint ($R^2 = 0.47$), with the
gap concentrated in translation magnitude
(Mag $R^2 = +0.60$ vs.\ all negatives)---consistent with
Theorem~\ref{thm:rot-trans}(d): metric scale requires statistical
priors over object sizes that only diverse training can provide.

\section{Conclusion, Limitations and Future Work}
In this paper, we investigated whether vision features capture the structure of Euclidean space by studying the alignment between these features and the group $SE(3)$ of rigid motions. We showed that, although no encoder reflects this structure directly, it is nevertheless possible to ``unroll'' the feature manifold  using a relatively simple adapter network in some cases. It is remarkable that visual feature changes can be mapped to pose changes by a homogeneous low-dimensional adapter that is independent of location. Interestingly, self-supervised models, while only trained through passive observation, do form a representation of 3D space, decodable 
when probed correctly.

One limitation of our study is that we only considered \textit{static} scenes where feature changes are only associated with camera motion, and thus do not test Poincar\'e's distinction between ``changes of position'' and ``changes of state''. It would be interesting to extend our formalism and approach to \textit{dynamic scenes}, which are governed by both camera motion as well as scene motion from underlying physical processes, aiming for an understanding of more profound spatio-temporal symmetries or conservation laws.
\section{Acknowledgements}
The authors would like to thank Dima Damen, João Carreira, Daniel Zoran, Gabrijel Boduljak and Andrew Zisserman for the many useful comments, discussions and feedback on this work.

\bibliographystyle{unsrt}
\bibliography{biblio}


\appendix

\newpage

\appendix

\section{Appendix}

\begin{figure}[ht]
    \centering
    \includegraphics[width=0.95\linewidth]{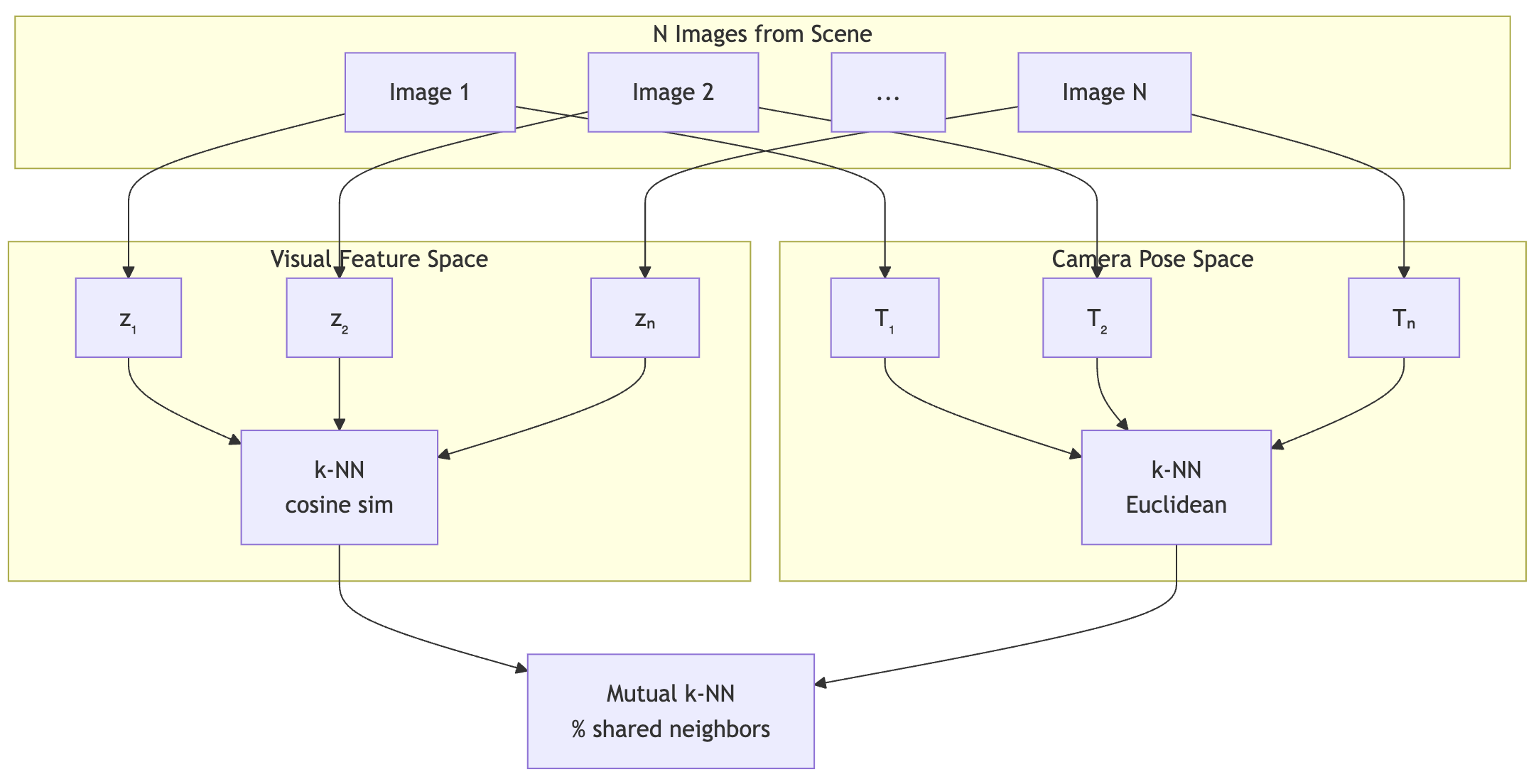}
    \caption{Overview of the mutual nearest neighbor metric \textbf{M1}. For each frame in a scene we extract its visual features, as well as its corresponding (ground truth) camera pose. We then compare the nearest neighbor graphs computed in the feature (left) and pose (right) spaces.}
    \label{fig:mutual_knn_diagram}
\end{figure}

\subsection{Metric \textbf{M1}: Mutual $k$-nn protocol}
\label{appendix:mutual_knn}
For a fixed scene we sample 256 frames, at a given stride. We compute a feature vector for each frame, and also associate to it the corresponding camera pose (see Figure~\ref{fig:scannet_examples_main} for examples of such frame pairs). We then compute the $k$ nearest-neighbors ($k=10$) in the feature space using cosine similarity, and compare these nearest neighbors to the nearest neighbors computed in the camera pose space. We use the 9D camera pose representation, comprising 3D translation and the first two columns of the rotation matrix, as defined in \cite{zhou2019continuity}, \textit{in world coordinates}.  To reduce the impact of temporal sampling, we \textit{exclude} temporal neighbors within 10 frames from the nearest neighbor computation in both visual feature and pose space. The mutual $k$-nn metric is defined as the average intersection (number of shared edges) between these nearest neighbor graphs (illustrated in Figure~\ref{fig:mutual_knn_diagram}). The metric ranges from 0 (no alignment) to 1 (perfect alignment) between visual and spatial representations. 

For video encoders that require, as input, a \textit{temporal window} of frames, we sample the encoder-native set of frames around each anchor frame and pool all features within this temporal window to obtain a single feature vector. We also experimented with keeping only the features associated with a specific anchor frame, without a noticeable difference in the results.

Since this metric requires \textit{a single vector per frame}, for encoders that contain a CLS token, we take features from that token. For others, we perform mean pooling over the patch tokens. 

\begin{figure}[ht]
    \centering
    \includegraphics[width=0.95\linewidth]{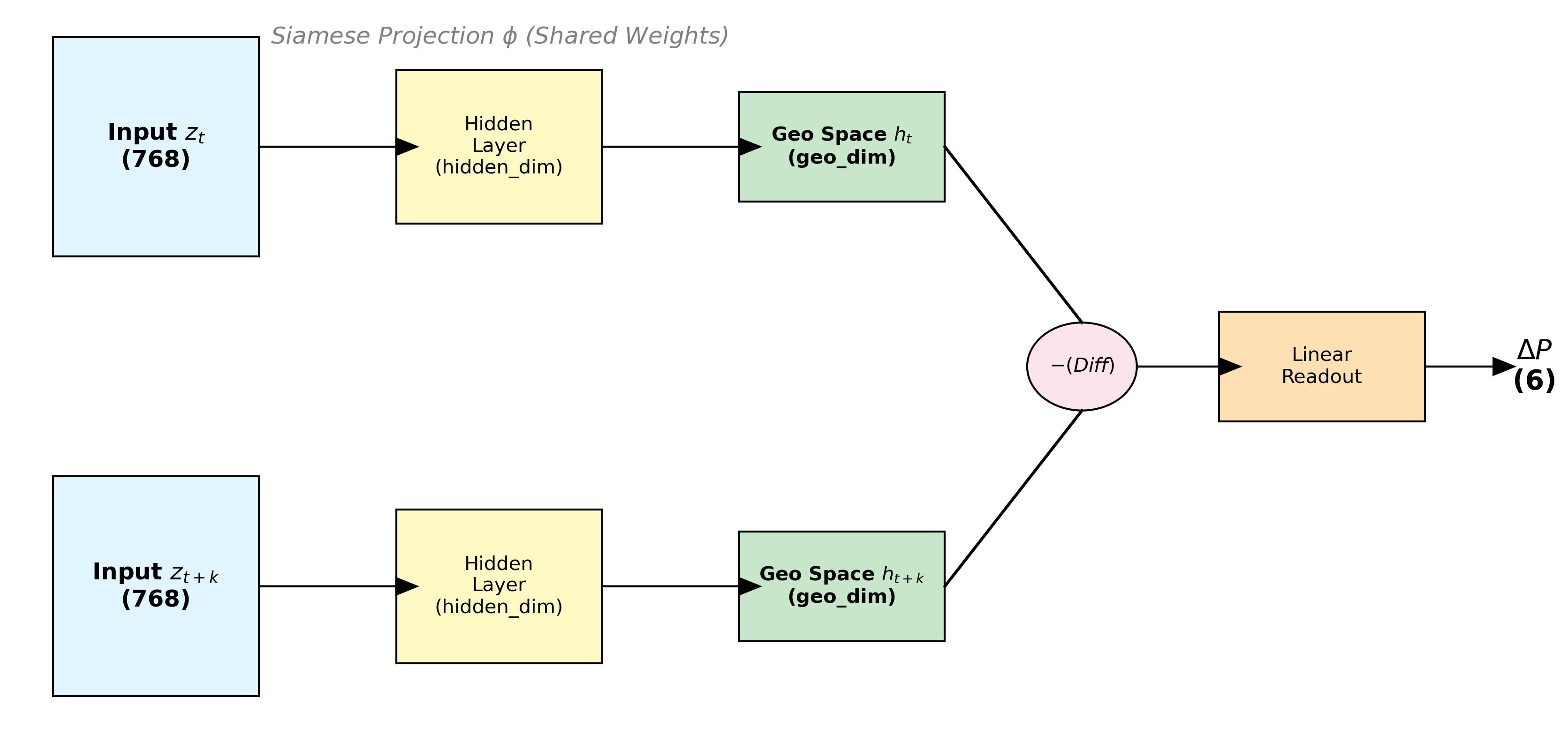}
    \caption{Poincar{\'e} Adapter architecture:  illustration of the Siamese architecture of our ``Poincaré adapter'' used in our Metric M4.}
\label{fig:adapter_architecture}
\end{figure}

\subsection{Metrics \textbf{M3} and \textbf{M4}: implementation details}
\label{app:implementation_details}

For both metrics, we construct frame pairs at a fixed temporal stride $s$ and split them using a \emph{time-based} strategy: for each scene, the first 80\% of frames (ordered by timestamp) are used for training and the remaining 20\% for testing, ensuring that no test frame appears during training.
For \textbf{M3} (Linear Equivariance), we fit a Ridge regression (scikit-learn, $\alpha{=}1.0$, with intercept) from the raw feature displacement $\Delta z = z_{t+s} - z_t$ to the 6D Lie-algebra pose target $\Delta P$, and report the test $R^2$ (uniform average over the 6 components). Table~\ref{tab:intrinsic_dim} reports results at the layer maximizing \textbf{M3} for each model.

To train and evaluate the Poincar\'e Adapter (\textbf{M4}), we use the same temporal train/test split. The Siamese projection network $\varphi_\theta$ is a 2-layer MLP: $\texttt{Linear}(d, 64) \to \texttt{LayerNorm} \to \texttt{GELU} \to \texttt{Linear}(64, 20)$, followed by a bias-free linear readout $W \in \mathbb{R}^{6 \times 20}$. The regression targets $\Delta P$ are z-scored (zero mean, unit variance) per component on the training set before training, and predictions are un-normalized before computing the test $R^2$. The adapter is optimized with AdamW (lr$=$1e-3, weight decay$=$1e-2) for 10 epochs with batch size 512 and gradient clipping at norm~1.0. Similarly to \textbf{M1} described above, for video encoders we use a temporal window around each anchor frame and pool  features within this window to obtain a single feature vector. We use a CLS token when available, or, mean pooled patch tokens, otherwise. The representative architecture of the ``Poincar\'e Adapter'' network is visualized in Figure~\ref{fig:adapter_architecture}.

\subsection{Full Intrinsic Dimension and M3 Results for All Models}
\label{app:full_intrinsic_dim}

In Table~\ref{tab:full_intrinsic_dim}, we report the Intrinsic Dimension (TwoNN and MLE) and the \textbf{M3} metric (Max Test $R^2$) for 18 different models evaluated in our sweep, using the layer that maximized \textbf{M3} performance for each model.

\begin{table}[ht]
\centering
\small
\begin{tabular}{@{}l|c|cc|c@{}}
\toprule
\textbf{Model} & \textbf{Layer} & \textbf{ID (TwoNN)} & \textbf{ID (MLE)} & \textbf{M3 (Max Test $R^2$)} \\
\midrule
clip & 9 & 9.61 & 7.77 & 0.07 \\
croco & 10 & 7.66 & 5.92 & -0.42 \\
dinov2-B & 9 & 9.15 & 6.41 & -0.28 \\
dinov2-L\_reg\_pooled & 11 & 8.00 & 5.20 & -0.11 \\
dinov3 & 9 & 10.76 & 7.88 & -0.26 \\
dust3r & 19 & 7.16 & 4.40 & -0.22 \\
lcm\_ddim\_noise & 3 & 9.16 & 7.72 & -0.06 \\
mast3r & 10 & 7.37 & 5.17 & -0.23 \\
moge & -1 & 5.93 & 4.78 & -0.69 \\
pe & 15 & 14.87 & 6.91 & -0.35 \\
pixels & -1 & 5.01 & 4.03 & -5.29 \\
sd15\_ddim\_noise & 3 & 9.87 & 8.35 & -0.04 \\
sd21\_ddim\_noise & 3 & 10.51 & 9.15 & -0.07 \\
sd\_dino\_unet & 8 & 9.29 & 5.83 & -0.75 \\
sd\_dino\_vae & 9 & 10.18 & 6.16 & -0.04 \\
svd & 3 & 7.70 & 6.70 & -0.02 \\
vggt\_dinov2 & 11 & 7.74 & 5.36 & -0.09 \\
vjepa21 & 10 & 9.55 & 5.13 & 0.09 \\
\bottomrule
\end{tabular}
\caption{\textbf{Intrinsic Dimension and M3 Metric.} Full results for all 18 models evaluated on ScanNet. Most models cluster near the theoretical ID of 6, while M3 $R^2$ is negative or near-zero for all, \textbf{suggesting} that no model is natively linearly equivariant.}
\label{tab:full_intrinsic_dim}
\end{table}

\subsection{Dataset Summary \& Difficulty Analysis}
\label{sec:difficulty}

We summarize the different datasets and the key performance in Table~\ref{tab:multidataset}.
To understand why pose recovery varies across environments, we trained independent Poincaré adapters on 913 scenes spanning 5 diverse datasets (ScanNet, 12-Scenes, ARKitScenes, TUM RGB-D, 7-Scenes) using three representative encoders (V-JEPA 2.1, DUST3R, DINOv2-B). Consistently with the results presented in the main paper, we use a temporal 80\%/20\% train/test split separation.
From a large-scale comparison across multiple environments, we observe that scene difficulty is driven, to a significant extent by data properties, in addition to the choice of foundation model. Specifically, success hinges on three factors: (1) \textbf{Data Sufficiency}: scenes with $>1000$ pairs succeed 81\% of the time, explaining why dense datasets like 12-Scenes achieve 100\% success while sparse ones like 7-Scenes (often $<50$ test pairs) artificially fail ($9\%$ success). (2) \textbf{Visual Richness}: highly \textit{textured} rooms provide the necessary semantic anchors, outperforming texture-less traversals. (3) \textbf{Workspace Geometry}: bounded rooms naturally support dense view overlap and closure, yielding much higher decodability than linear hallway traversals. A summary of the dataset difficulties is provided in Table~\ref{tab:multidataset}.

\begin{table}[t!]
\centering
\small
\resizebox{\linewidth}{!}{
\begin{tabular}{lccccc}
\toprule
\textbf{Dataset} & \textbf{Scenes} & \textbf{Avg Frames/Scene} & \textbf{$R^2 > 0$ Rate} & \textbf{Best $R^2$} & \textbf{Bottleneck} \\
\midrule
\textbf{12-Scenes} & 10 & $\approx 1{,}000$ & \textbf{100\%} & 0.80 & None — perfect dataset \\
ScanNet & 707 & $200\text{--}5{,}000$ & 40\% & 0.74 & Single-scan data scarcity \\
ARKitScenes & 133 & $\approx 1{,}500$ & 59\% & 0.37 & Single iPhone scan, motion blur \\
TUM RGB-D & 17 & $200\text{--}800$ & 47\% & 0.44 & Traversals, texture-less scenes \\
\textbf{7-Scenes} & 46 & $\approx 500$ & \textbf{9\%} & 0.19 & $\approx 40$ test frame pairs — evaluation noise \\
\bottomrule
\end{tabular}
}
\vspace{1mm}
\caption{\textbf{Multi-Dataset Evaluation.} We evaluate the decodability of geometric features across 913 individual scenes. Performance varies dramatically by dataset, driven by data sufficiency, visual richness, and workspace geometry rather than foundation model capability.}
\label{tab:multidataset}
\end{table}

Note that the intrinsic dimensionality of most features typically is close to the theoretical value 6 (see Theorem~\ref{thm:intrinsic_dim} below), while the $R^2$ value for \textit{linear} equivariance Metric M3 is negative or close to zero, suggesting that \textit{no} model is natively aligned with rigid motion.

\subsection{Ablation: Alternative Formulations for Pose Recovery}
\label{app:alternatives}

We compare the Poincar\'e adapter formulation against four alternatives
for recovering camera pose from frozen DINOv2-B (Layer~7) features on ScanNet.
All methods share the same feature backbone, train/test splits (time-based,
3~scenes), and adapter capacity (hidden\_dim$=$64, geo\_dim$=$20).
Results are averaged over 15 random seeds.

\paragraph{Methods.}
Let $z_i = z(I_{g_i}) \in \mathbb{R}^{768}$ denote the frozen feature for
image~$I$ at pose $g_i \in \mathrm{SE}(3)$, and let
$\Delta P = \log(g_2 g_1^{-1}) \in \mathbb{R}^6$ be the Lie algebra target.

\begin{itemize}
  \item \textbf{Poincar\'e adapter} (baseline):
    $\hat{\Delta P} = W\bigl(\varphi(z_2) - \varphi(z_1)\bigr)$,
    where $\varphi$ is a 2-layer MLP and $W$ is a bias-free linear readout.
    Loss: MSE on z-scored Lie algebra targets.
    \emph{Siamese}: $\varphi$ is applied independently to each feature
    before taking the difference.

  \item \textbf{Method~A} (unanchored relative):
    $\hat{M} = \psi(z_2) \cdot \psi(z_1)^{-1}$, where $\psi$ maps each
    feature to an SE(3) matrix (3D translation + 6D rotation representation),
    consistent with the target convention $M_{\mathrm{gt}} = g_2 g_1^{-1}$.
    Loss: $\|\log(\hat{M}^{-1} M_{\mathrm{gt}})\|^2$ (geodesic on SE(3)).

  \item \textbf{Method~B} (absolute pose):
    $\hat{W}_i = \psi(z_i)$ predicts the absolute camera-to-world pose;
    the relative pose is recovered as
    $\hat{M} = \hat{W}_2 \cdot \hat{W}_1^{-1}$,
    consistent with $M_{\mathrm{gt}} = g_2 g_1^{-1}$.
    Loss: geodesic on SE(3) for each absolute pose independently,
    i.e., $\sum_i \|\log(\hat{W}_i^{-1} g_i)\|^2$.
    Note that the composition formula is used only at evaluation time;
    the training signal is purely on absolute poses.

  \item \textbf{Method~C} (SE(3) MLP on $\Delta\varphi$):
    $\hat{M} = \mathrm{MLP}_{\mathrm{SE3}}\bigl(\varphi(z_2) - \varphi(z_1)\bigr)$,
    using the same Siamese $\varphi$ but outputting an SE(3) matrix instead
    of a Lie algebra vector.
    Loss: geodesic on SE(3).

  \item \textbf{Method~D} (non-Siamese difference):
    $\hat{\Delta P} = W \cdot \varphi(z_2 - z_1)$, where $\varphi$ operates
    on the raw feature difference rather than on individual features.
    Loss: MSE on z-scored Lie algebra targets (same as baseline).
\end{itemize}

\paragraph{Results.}
Table~\ref{tab:alternatives} summarizes the main results of this ablation.

\begin{table}[h!]
\centering
\small
\begin{tabular}{@{}lrrrrr@{}}
  \toprule
  Method & Overall $R^2$ & Trans.\ $R^2$ & Rot.\ $R^2$ & Dir.\ $R^2$ & Mag.\ $R^2$ \\
  \midrule
  \textbf{Poincar\'e adapter} & $\mathbf{0.61 \pm 0.04}$ & $\mathbf{0.63}$ & $\mathbf{0.59}$ & $\mathbf{0.41}$ & $\mathbf{0.60}$ \\
  D: $\varphi(\Delta z)$ non-Siamese & $0.48 \pm 0.07$ & $0.44$ & $0.53$ & $0.08$ & $-0.08$ \\
  C: SE(3) MLP on $\Delta\varphi$ & $-0.43 \pm 2.36$ & $0.25$ & $-1.10$ & $-0.05$ & $-1.45$ \\
  A: Unanchored relative & $-6.82 \pm 4.83$ & $-0.66$ & $-12.98$ & $-0.97$ & $-0.99$ \\
  B: Absolute pose & $-1.75 \pm 2.52$ & $-0.80$ & $-2.70$ & $-0.98$ & $-3.10$ \\
  \bottomrule
\end{tabular}
\caption{\textbf{Probe Formulations for SE(3) Recovery} from frozen DINOv2-B (Layer~7) features. The Siamese Poincar\'e adapter ($R^2{=}0.61$) substantially outperforms all alternatives, including absolute pose prediction (which fails) and non-Siamese variants.}
\label{tab:alternatives}
\end{table}

Three findings emerge.
\textbf{(1)~Relative changes over absolute poses.}
Method~B, which predicts absolute camera pose from individual features,
fails catastrophically ($R^2 = -1.75$), while all relative-change methods
achieve positive $R^2$ on at least some components.
We attribute this failure to the fact that DINOv2 features do not encode
scene-specific absolute camera coordinates---they encode
\emph{relational} structure between views.
Even a hypothetically perfect composition formula cannot recover
meaningful relative poses from absolute predictions that do not
correlate with the true camera positions.
\textbf{(2)~Linearization helps.}
Methods~A and~C operate on SE(3) directly rather than on the flat Lie algebra
$\mathfrak{se}(3)$, yet both perform worse than the linearized Poincar\'e adapter.
Method~C uses the same Siamese architecture and differs only in replacing
the linear readout~$W$ with an MLP outputting SE(3) matrices, yet its $R^2$
drops from $0.61$ to $-0.43$.
This suggests that the feature differences $\varphi(z_2) - \varphi(z_1)$
are naturally aligned with the Lie algebra structure, and that the
linearization $\Delta P \approx W \cdot \Delta\varphi(z)$ is not a lossy
approximation but rather matches the geometry of the representation.
\textbf{(3)~Siamese structure matters.}
Method~D uses the same linear readout and loss as the baseline but applies
$\varphi$ to the raw difference $z_2 - z_1$ instead of computing
$\varphi(z_2) - \varphi(z_1)$.
This seemingly minor change drops $R^2$ from $0.61$ to $0.48$, with the
translation magnitude component collapsing from $0.60$ to $-0.08$.
The Siamese architecture, in which $\varphi$ normalizes each feature
independently before subtraction, is essential for the pose signal to be
linearly decodable from the difference.

\subsection{Pose Recovery Components Difficulty Analysis}
\label{sec:components_difficulty}
To further elaborate on our key metric \textbf{M4} results and analyze the difficulty of recovering different components of camera pose, we decompose the Poincar\'e adapter's overall $R^2$ into three major parts:
rotation (3~DoF), translation direction (2~DoF), and translation magnitude (1~DoF),
where direction and magnitude together form the full translation (3~DoF).

We assess whether there is a difference in  the difficulty of recovering these components. To this end, we conducted a hyperparameter sweep training different Poincaré adapters to optimize metric \textbf{M4} following the protocol in \ref{app:implementation_details}. We trained 3{,}328 unique adapter configurations evaluated with up to 20~random seeds each, totaling 38{,}929 runs.
All runs use frozen DINOv2-B features (Layer~7, CLS token, $d = 768$) extracted
from 3~ScanNet scenes (${\sim}1{,}200$ frames total, stride~2).
The adapter consists of a Siamese nonlinear projection
$\varphi$~(2-layer MLP) followed by a bias-free linear readout~$W$;
we sweep over bottleneck dimension $\texttt{geo\_dim} \in \{10, 20, 40, 80, 160, 320\}$,
number of mixture-of-experts heads $K \in \{1, 2, 4\}$,
learning rate $\texttt{lr} \in [3{\times}10^{-4},\, 3{\times}10^{-2}]$,
and loss type $\in \{\text{MSE},\, \text{cosine},\, \text{rot-only}\}$.
Evaluation pairs are sampled at a fixed temporal gap (stride) of $s = 40$ frames,
and $R^2$ is computed on a held-out time-based test split.

\paragraph{Difficulty hierarchy.}
Table~\ref{tab:success} reports the fraction of adapter configurations achieving
positive $R^2$ (better than predicting the constant mean) and $R^2 > 0.5$,
aggregated over all runs.

\begin{table}[ht]
\centering
\small
\begin{tabular}{@{}lrrrr@{}}
  \toprule
  Component & Best $R^2$ & \% configs ${>}\,0$ & \% configs ${>}\,0.5$ & Seed $\sigma$ \\
  \midrule
  Rotation           & $0.813$ & $88.1\%$           & $11.2\%$             & $0.17$ \\
  Translation        & $0.730$ & $71.7\%$           & $7.0\%$              & $0.32$ \\
  Trans.\ direction  & $0.665$ & $60.5\%$           & $3.9\%$              & $0.30$ \\
  Trans.\ magnitude  & $0.725$ & $\mathbf{26.7\%}$  & $6.7\%$              & $\mathbf{1.03}$ \\
  \bottomrule
\end{tabular}
\caption{\textbf{Success Rates and Seed Stability} across 38{,}929 Poincar\'e adapter runs (DINOv2-B, Layer~7, ScanNet).
Seed $\sigma$ is the average within-config standard deviation. Rotation is reliably decodable (88\% positive $R^2$), while translation magnitude is not (27\%, with $6\times$ the seed variance).}
\label{tab:success}
\end{table}

A consistent ordering emerges:
$R^2_{\text{rot}} > R^2_{\text{trans}} > R^2_{\text{dir}} > R^2_{\text{mag}}.$
Rotation is positive in 88\% of configurations; translation magnitude in only 27\%.
Moreover, translation magnitude has \textbf{6$\times$ the seed variance} of rotation
($\sigma = 1.03$ vs.\ $0.17$), meaning that even the configurations that achieve
positive $R^2_{\text{mag}}$ in one seed often fail in another.
While individual configurations occasionally achieve $R^2_{\text{mag}}$ up to
$0.72$, the metric is not \emph{reliably} linearly decodable: across all runs,
the mean $R^2_{\text{mag}}$ is $-3.30$.

\paragraph{Why rotation is easier.}
The difficulty hierarchy has a precise geometric explanation rooted in
optical flow and formalized in Theorem~\ref{thm:rot-trans} below. Intuitively, rotational flow is depth-independent, whereas the translational Jacobian depends on depth map (see Theorem~\ref{thm:rot-trans} for details).


\paragraph{Joint training regularizes rotation.}
Perhaps counter-intuitively, training the adapter to predict all 6~DoF jointly
produces better rotation $R^2$ than predicting rotation alone.
Under matched conditions (same architecture, 5~seeds):
\begin{center}
\small
\begin{tabular}{@{}lrrr@{}}
  \toprule
  Objective & Rot $R^2$ mean & Rot $R^2$ std & Rot $R^2$ max \\
  \midrule
  Full 6D (rot + trans)  & $\mathbf{0.613}$ & $0.19$ & $\mathbf{0.813}$ \\
  Rot-only (3D readout)  & $0.552$ & $0.05$ & $0.634$ \\
  \bottomrule
\end{tabular}
\end{center}
One possible interpretation is that the translation objective acts as a multi-task regularizer for the shared feature projection~$\varphi$, reducing the possibility of overfitting.

\subsection{Theoretical Analysis}
\label{subsec:theory}
Before proceeding, we note that even without considering the specifics of the visual feature extractor, under fairly general conditions, the acquisition process itself imposes theoretical constraints on the structure in the feature domain. To state this formally, we frame the visual feature acquisition as a continuous generative process over time, and express these constraints with the theorem below (which is derived from basic principles, but which we state explicitly for clarity and completeness):

\begin{restatable}{theorem}{IntrinsicDimThm}
\label{thm:intrinsic_dim}
Let $\mathcal{S}$ be a static scene, and let $\mathcal{U} \subseteq SE(3)$ be an open set of valid camera poses. Consider a visual feature acquisition process $\mathcal{P}(t)$ parameterized by time $t \in \mathbb{R}$, and defined as:
\begin{align}
    \mathcal{P}(t) = (\mathcal{F} \circ P \circ C)(t)
\end{align}
where $C: \mathbb{R} \to \mathcal{U}$ maps time to a camera pose, $P: \mathcal{U} \to \mathcal{I}$ is the rendering mapping from a camera pose to an image in the space of pixel values $\mathcal{I}$, and $\mathcal{F}: \mathcal{I} \to \mathbb{R}^d$ is a visual feature extractor. Then:
\begin{enumerate}
    \item $P$ is a well-defined function, with a unique image for any given pose, and any temporal trajectory $\mathcal{P}(t)$ is confined to the scene-specific latent set $\mathcal{M}_{\mathcal{S}} = (\mathcal{F} \circ P)(\mathcal{U})$.
    \item Assuming $\Phi = \mathcal{F} \circ P$ is a smooth mapping, the intrinsic (Hausdorff) dimensionality of $\mathcal{M}_{\mathcal{S}}$ is at most $6$.
    \item If $\Phi = \mathcal{F} \circ P$ is a smooth \emph{embedding}---meaning it is an injective immersion that maps homeomorphically onto its image---then $\mathcal{M}_{\mathcal{S}}$ is a regular smooth submanifold of $\mathbb{R}^d$ with  intrinsic dimension \emph{of exactly} $6$.
\end{enumerate}
\end{restatable}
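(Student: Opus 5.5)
The plan is to handle the three parts in order. Each rests on a standard fact about maps out of a 6-dimensional manifold. Throughout, $\mathcal{U}$ is an open subset of $SE(3)$, and $SE(3)$ is a 6-dimensional Lie group. So $\mathcal{U}$ is a 6-dimensional smooth manifold and, in particular, second countable.

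For part 1, I would argue directly from the modelling assumptions. The scene $\mathcal{S}$ is static, so the radiance reaching the camera depends only on the camera pose (and fixed intrinsics), not on $t$. Hence the rendering $P$ assigns a single image to each pose $g\in\mathcal{U}$. This is really the content of the static-scene hypothesis, and I would state it as such rather than derive it. Then $C(t)\in\mathcal{U}$ for every $t$, which gives $\mathcal{P}(t)=\Phi(C(t))\in\Phi(\mathcal{U})=\mathcal{M}_{\mathcal{S}}$. Containment in $\mathcal{M}_{\mathcal{S}}$ is therefore immediate and does not depend on the particular trajectory $C$.

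For part 2, I would use the fact that a $C^1$ map is locally Lipschitz, and locally Lipschitz maps do not increase Hausdorff dimension.
\begin{enumerate}
\item Cover $\mathcal{U}$ by countably many coordinate charts. Second countability makes this possible, and the exponential map on $\mathfrak{se}(3)\cong\mathbb{R}^6$ supplies the charts.
\item Shrink each chart to a compact ball $K_i$. On $K_i$, $\Phi$ composed with the chart is Lipschitz, by the mean value inequality and the bounded Jacobian.
\item Lipschitz images satisfy $\mathcal{H}^s(\Phi(K_i))\le L_i^s\,\mathcal{H}^s(K_i)$. Hence $\dim_H \Phi(K_i)\le \dim_H K_i=6$.
\item Hausdorff dimension is countably stable: $\dim_H\bigcup_i A_i=\sup_i\dim_H A_i$. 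This gives $\dim_H\mathcal{M}_{\mathcal{S}}\le 6$.
\end{enumerate}
The only care needed is that smoothness is used only through local Lipschitz continuity, and that countably many pieces suffice.

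For part 3, I would invoke the standard theorem that the image of a smooth embedding is an embedded (regular) submanifold diffeomorphic to its domain.
\begin{enumerate}
\item Since $\Phi$ is an immersion, the constant-rank (or local immersion) theorem gives, around each $g$, local coordinates in which $\Phi$ has the form $x\mapsto(x,0)$. So a small neighborhood $V$ of $g$ has image $\Phi(V)$ that is a 6-dimensional slice in some open set $O\subset\mathbb{R}^d$.
\item Since $\Phi$ is a homeomorphism onto its image, $\Phi(V)$ is relatively open in $\mathcal{M}_{\mathcal{S}}$. So we may shrink $O$ until $O\cap\mathcal{M}_{\mathcal{S}}=\Phi(V)$. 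This gives slice charts, hence a regular submanifold of dimension $6$.
\item For the dimension to be exactly $6$, the manifold dimension is $6$. In a slice chart the set is bi-Lipschitz to an open subset of $\mathbb{R}^6$, so its Hausdorff dimension is also $6$ from below. Combined with part 2, this gives equality.
\end{enumerate}

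The main obstacle is step 2 of part 3: making sure the slice charts are genuinely local in $\mathbb{R}^d$ and do not pick up distant parts of the image. An injective immersion alone could produce a figure-eight-like accumulation. The homeomorphism-onto-image hypothesis is exactly what rules this out, so I would write that step out explicitly. The rest is citation of standard results, for example Lee's \emph{Introduction to Smooth Manifolds}.
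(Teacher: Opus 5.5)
Your proposal is correct and follows the paper's proof. The paper gets part 1 directly from the static-scene assumption, part 2 from local Lipschitz continuity of a smooth map (Lipschitz maps do not raise Hausdorff dimension), and part 3 from the standard fact that the image of a smooth embedding is a regular submanifold diffeomorphic to its domain. You unpack those cited facts (a countable cover with countable stability of $\dim_H$, and explicit slice charts built from the homeomorphism-onto-image hypothesis), and your bi-Lipschitz lower bound in part 3 adds something the paper does not state: that the Hausdorff dimension, and not only the manifold dimension, is exactly $6$.
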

\begin{proof}
We address each claim sequentially:
\begin{enumerate}
    \item For a strictly static scene $\mathcal{S}$, the geometry, materials, and illumination of the environment are invariant over time. Thus, time $t$ influences the visual observation solely through the trajectory $C(t)$, making $P: \mathcal{U} \to \mathcal{I}$ a well-defined mapping that assigns a unique, repeatable image to any specific pose. Moreover, by definition, the camera trajectory is restricted to valid poses, meaning $C(t) \in \mathcal{U}$ for all $t \in \mathbb{R}$. Therefore, the composed mapping evaluated at any time $t$ yields $\mathcal{P}(t) = \mathcal{F}(P(C(t))) \in (\mathcal{F} \circ P)(\mathcal{U}) = \mathcal{M}_{\mathcal{S}}$. The continuous, time-parameterized sequence of features $\mathcal{P}(\mathbb{R})$ is thus a 1-dimensional curve constrained entirely within the bounds of the spatial set $\mathcal{M}_{\mathcal{S}}$.
    \item Let $\Phi = \mathcal{F} \circ P$. The domain of valid poses $\mathcal{U}$ is an open subset of the Lie group $SE(3)$, which is a smooth manifold of dimension $6$. Because the composition $ \Phi: \mathcal{U} \to \mathbb{R}^d $ is assumed to be a smooth mapping, it is continuously differentiable and therefore locally Lipschitz continuous. By standard results in geometric measure theory \cite{federer1969geometric}, locally Lipschitz mappings do not increase the Hausdorff dimension of a set. Therefore, $\dim_{\mathcal{H}}(\mathcal{M}_{\mathcal{S}}) = \dim_{\mathcal{H}}(\Phi(\mathcal{U})) \leq \dim_{\mathcal{H}}(\mathcal{U}) = 6$.
    \item By definition, a smooth embedding $\Phi$ is an immersion (its differential $d\Phi$ has a full rank of $6$ everywhere) that is injective and maps homeomorphically onto its image. Under these assumptions, standard differential topology \cite{lee2012smooth} guarantees that the image of an embedded manifold is a regular smooth submanifold, diffeomorphic to the domain. Because diffeomorphisms preserve dimensionality, $\mathcal{M}_{\mathcal{S}} = \Phi(\mathcal{U})$ is a regular smooth submanifold of $\mathbb{R}^d$ with an exact intrinsic dimension of $6$.
\end{enumerate}
\end{proof}

\subsubsection{Geometric Decodability from Features -- Setup and Notation}

Throughout the theoretical analysis, relative pose is expressed in the \emph{body frame}: $\xi = \log(g_1^{-1} g_2)$, which describes the
displacement as seen from~$g_1$. This is consistent with Eq.~\eqref{eq:body_displacement} of the main paper. Note that for the Poincaré adapter we define the target motion $\Delta P$ in the \textbf{local camera coordinate frame} (the body frame). In this frame, a translation vector corresponds to an agent-centric action (e.g., ``move forward 1 meter'') rather than a change in absolute map coordinates (e.g., ``move North''). This choice is essential for \textbf{homogeneity}: a specific visual change (like the radial expansion of optical flow) should always correspond to the same displacement vector $\Delta P$, regardless of where the agent is located in the world or which direction it is facing.

In Appendix~\ref{app:alternatives}, Methods A and B use the world-frame convention $\log(g_2 g_1^{-1})$; the two are related by the adjoint action $\log(g_2 g_1^{-1}) = \mathrm{Ad}_{g_1},\log(g_1^{-1} g_2)$.

\textbf{Rendering model.}
Fix a 3D scene~$s$. A camera with pose $g \in \SE(3)$ renders an image
$I_g = \mathcal{R}(g, s) \in \R^{H \times W \times 3}$.
We write the Lie algebra decomposition
$\se(3) = \so(3) \oplus \R^3$, so a tangent vector
$\xi = (\omega, v)$ encodes infinitesimal rotation
$\omega \in \R^3$ and translation $v \in \R^3$.

\textbf{Encoder.}
Let $z\colon \R^{H \times W \times 3} \to \R^d$ be a feature encoder
(e.g., a vision transformer). Define the \emph{posed feature map}:
\[
  f\colon \SE(3) \to \R^d, \quad f(g) = z\bigl(\mathcal{R}(g, s)\bigr).
\]

\begin{definition}[Poincar\'e adapter]
We say $f$ admits a \emph{Poincar\'e adapter at~$g_0$} if there exist
a $C^1$ diffeomorphism $\varphi\colon U \to V$
(where $U \ni f(g_0)$ is open in $\R^d$)
and a matrix $W \in \R^{6 \times d}$ such that for $g$ near $g_0$:
\[
  W \cdot \bigl(\varphi(f(g)) - \varphi(f(g_0))\bigr)
  = \log(g_0^{-1} g) + O\!\bigl(\|\log(g_0^{-1} g)\|^2\bigr).
\]
We say $f$ admits a \emph{linear Poincar\'e adapter}
if the above holds with $\varphi = \id$.
\end{definition}

\subsubsection{Local Geometric Decodability}

\begin{theorem}[Local Geometric Decodability]
\label{thm:local}
Let $f\colon \SE(3) \to \R^d$ with $d \geq 6$ be the posed feature map
of a $C^1$ encoder~$z$. Suppose:
\begin{enumerate}
  \item[\textbf{(C1)}] \textbf{Smoothness.}
    $f$ is $C^1$ on a neighborhood of $g_0 \in \SE(3)$.
  \item[\textbf{(C2)}] \textbf{Local discriminability.}
    The differential
    $\mathrm{d}f_{g_0}\colon \se(3) \to \R^d$ has rank~$6$.
\end{enumerate}
Then $f$ admits a linear Poincar\'e adapter at~$g_0$.
Explicitly, there exists $W \in \R^{6 \times d}$ such that:
\[
  W \cdot \bigl(f(g) - f(g_0)\bigr)
  = \log(g_0^{-1} g)
    + O\!\bigl(\|\log(g_0^{-1} g)\|^2\bigr).
\]
\end{theorem}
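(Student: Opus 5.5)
The plan is to pull $f$ back to the Lie algebra through the exponential chart at $g_0$ and then take a left inverse of the differential. Define $h\colon \se(3)\supset B \to \R^d$ by
\[
  h(\xi) = f\bigl(g_0 \exp(\xi^\wedge)\bigr).
\]
Here $B$ is a small ball around $0$ on which $\exp$ is a diffeomorphism onto a neighborhood of the identity. On that ball we have $\log(g_0^{-1} g_0\exp(\xi^\wedge))^\vee = \xi$. By (C1), $h$ is $C^1$ near $0$. Since $\mathrm{d}\exp_0 = \id$, the chain rule gives
\[
  \mathrm{d}h_0 = \mathrm{d}f_{g_0}\circ \mathrm{d}(L_{g_0})_e .
\]
This is exactly the differential in (C2) once $\se(3)$ is identified with $T_{g_0}\SE(3)$ by left translation. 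Write $J = \mathrm{d}h_0 \in \R^{d\times 6}$. By (C2), $J$ has rank $6$ and so is injective; this uses $d\ge 6$.

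Next, take $W = J^\dagger = (J^\top J)^{-1}J^\top$. Any left inverse would do, so $WJ = I_6$. For $g$ near $g_0$, set $\xi = \log(g_0^{-1}g)^\vee$, so that $g = g_0\exp(\xi^\wedge)$. Taylor expansion gives $f(g) - f(g_0) = h(\xi) - h(0) = J\xi + r(\xi)$, and therefore
\[
  W\bigl(f(g)-f(g_0)\bigr) = \xi + W r(\xi).
\]

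The main obstacle is the order of the remainder. Under $C^1$ alone, the mean value theorem only gives $\|r(\xi)\| \le \sup_{\|\eta\|\le\|\xi\|}\|\mathrm{d}h_\eta - \mathrm{d}h_0\|\,\|\xi\| = o(\|\xi\|)$, not $O(\|\xi\|^2)$. To get the quadratic bound stated, I would strengthen (C1) slightly: either $f$ is $C^2$, or $\mathrm{d}f$ is locally Lipschitz near $g_0$ (i.e.\ $C^{1,1}$). Then
\[
  r(\xi) = \int_0^1 (\mathrm{d}h_{t\xi}-\mathrm{d}h_0)\,\xi\,dt
\]
has norm at most $\tfrac12 L\|\xi\|^2$, so $\|W r(\xi)\| \le \tfrac12\|W\|L\|\xi\|^2$. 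The map $\exp$ is analytic and $\log$ is smooth near the identity, so composing with them preserves this regularity. I would state this explicitly: the $O(\|\xi\|^2)$ form needs $C^{1,1}$, while with only (C1) the conclusion holds with $o(\|\xi\|)$. Either form matches the "linear Poincaré adapter" definition with $\varphi=\id$, which is trivially a $C^1$ diffeomorphism.

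A final remark for the proof: this $W$ depends on $g_0$ through $J$. The theorem is purely local, and that dependence is what motivates the global analysis in Theorem~\ref{thm:global}.
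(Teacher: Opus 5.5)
Your argument is essentially the paper's proof: both pull $f$ back along $\xi\mapsto g_0\exp(\xi)$, use $\mathrm{d}\exp_0=\id$ to identify the Jacobian at $0$ with $\mathrm{d}f_{g_0}$, take a left inverse $W$ of this rank-$6$ matrix, and Taylor-expand. Your regularity caveat is correct and goes beyond the paper, which invokes Taylor's theorem for an $O(\|\xi\|^2)$ remainder under (C1) alone. That step needs $C^{1,1}$ (or the $C^2$ assumed in Theorem~\ref{thm:global}); with only $C^1$ one gets $o(\|\xi\|)$. For example, $f(g_0\exp\xi)=f(g_0)+J(\xi+\|\xi\|^{3/2}e_1)$ with $J$ injective is $C^1$ and defeats every choice of $W$.
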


\begin{proof}
Define $F\colon \se(3) \cong \R^6 \to \R^d$ by
$F(\xi) = f(g_0 \cdot \exp(\xi)) - f(g_0)$.
Then $F(0) = 0$ and the Jacobian at the origin is
$J_F(0) = \mathrm{d}f_{g_0}$
(since $\mathrm{d}\exp_0 = \id$ on $\se(3)$).

By~(C2), $J_F(0) \in \R^{d \times 6}$ has rank~$6$,
so it admits a left inverse:
there exists $W \in \R^{6 \times d}$ with $W \cdot J_F(0) = I_6$.
By Taylor's theorem:
\[
  W \cdot F(\xi)
  = W \cdot J_F(0) \cdot \xi + O(\|\xi\|^2)
  = \xi + O(\|\xi\|^2).
\]
Since $g = g_0 \cdot \exp(\xi)$ implies $\xi = \log(g_0^{-1} g)$,
this gives the result.
\end{proof}

\begin{remark}
Theorem~\ref{thm:local} is essentially trivial---it follows
from Taylor's theorem and the existence of a left inverse.
The content is in recognizing that~(C2) is the \emph{only} condition
needed for local decodability: any smooth, locally discriminative
encoder automatically admits a Poincar\'e adapter.
The hard question is:
\textbf{when does a single~$W$ work across a range of poses~$g_0$?}
This is the content of Theorem~\ref{thm:global}.
\end{remark}

\subsubsection{Global Linear Decodability and Its Obstruction}

The local readout $W(g_0)$ from Theorem~\ref{thm:local} depends on
$g_0$ because the Jacobian $\mathrm{d}f_{g_0}$ varies with pose.
Define the \textbf{Jacobian field}:
\[
  J\colon \SE(3) \to \R^{d \times 6}, \quad J(g) = \mathrm{d}f_g.
\]

\begin{theorem}[Global Geometric Decodability]
\label{thm:global}
Let $f\colon \SE(3) \to \R^d$ be $C^2$ with
$\rank(\mathrm{d}f_g) = 6$ for all~$g$ in a connected compact
geodesically convex region $K \subset \SE(3)$
(i.e., every minimizing geodesic between points in~$K$ lies in~$K$).

\begin{enumerate}
\item[\textbf{(a)}] \textbf{Linear Poincar\'e adapter.}
Fix any $W \in \R^{6 \times d}$ that is a left inverse of $J(g_*)$
for some base pose $g_* \in K$ (i.e., $W \cdot J(g_*) = I_6$). Then:
\[
  W \cdot \bigl(f(g_2) - f(g_1)\bigr)
  = \log(g_1^{-1} g_2) + \varepsilon(g_1, g_2)
\]
with first-order error
\[
  \|\varepsilon(g_1, g_2)\|
  \;\leq\;
  \Bigl(\sup_{g \in K} \|W \cdot J(g) - I_6\|\Bigr)
  \cdot \|\log(g_1^{-1} g_2)\|
  + O\!\bigl(\|\log(g_1^{-1} g_2)\|^2\bigr).
\]
The first-order error vanishes if and only if $W \cdot J(g)$
is constant over~$K$---i.e., the Jacobian field is
\textbf{constant modulo $\ker W$}.

\item[\textbf{(b)}] \textbf{Obstruction (manifold curvature).}
The first-order error is controlled by the \textbf{curvature} of the
feature manifold $f(K) \subset \R^d$:
\[
  \sup_{g \in K} \|W \cdot J(g) - I_6\|
  \;\leq\;
  \diam(K) \cdot \sup_{g \in K} \|W \cdot \nabla J(g)\|,
\]
where $\nabla J$ denotes the covariant derivative of~$J$ along~$K$,
and $\diam(K)$ is the diameter of~$K$ in $\SE(3)$.

\item[\textbf{(c)}] \textbf{Nonlinear adapter and intrinsic curvature.}
Suppose further that the feature map $f$ is globally injective on $K$ (hence an embedding). 
Without loss of generality, choose the world coordinate frame such that the identity 
pose $I \in K$. There exists a $C^1$ diffeomorphism $\varphi\colon U \to V$ 
(where $U, V \subset \R^d$ are open neighborhoods of $f(K)$ and $\varphi(f(K))$, respectively) 
and a readout matrix $W \in \R^{6 \times d}$ such that for all $g_1, g_2 \in K$, the Siamese 
adapter yields:
\[
  W \cdot \bigl(\varphi(f(g_2)) - \varphi(f(g_1))\bigr)
  = \log(g_1^{-1} g_2) + \varepsilon_{\mathrm{Lie}}(g_1, g_2)
    + O\!\bigl(\|\log(g_1^{-1} g_2)\|^2\bigr).
\]
The nonlinear adapter flattens the extrinsic curvature of the neural feature 
manifold (removing the Jacobian variation). The remaining first-order 
error, $\varepsilon_{\mathrm{Lie}}$, is strictly intrinsic to the non-commutative geometry 
of $\SE(3)$, governed by the inverse differential of the exponential map
$(\mathrm{d}\exp_X)^{-1}$. Its leading term is the Lie bracket, and it
is bounded by:
\[
  \|\varepsilon_{\mathrm{Lie}}(g_1, g_2)\| 
  \leq C(R_K) \cdot \|\log(g_1^{-1} g_2)\|,
\]
where $C(R_K) \to 0$ as $R_K \to 0$, with leading behavior
$C(R_K) = \tfrac{1}{2} C_{\mathrm{Lie}}\, R_K + O(R_K^2)$.
where $R_K = \max_{g \in K} \|\log(g)\|$ is the radius of $K$
in the Lie algebra.
\end{enumerate}
\end{theorem}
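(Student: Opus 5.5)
The plan is to prove all three parts by pulling the feature difference back to the Lie algebra along a curve and expanding to first order, as in the proof of Theorem~\ref{thm:local}.

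For \textbf{(a)}, fix $g_1,g_2\in K$ and set $\xi=\log(g_1^{-1}g_2)$. Consider the curve $\gamma(\tau)=g_1\exp(\tau\xi)$, $\tau\in[0,1]$. We take it to be the minimizing geodesic joining $g_1$ to $g_2$, so that geodesic convexity keeps it inside $K$.
\begin{itemize}
\item Its body-frame velocity is constant, equal to $\xi$.
\item Hence $\frac{d}{d\tau}f(\gamma(\tau))=J(\gamma(\tau))\,\xi$, with $J$ read in left-trivialized coordinates as in Theorem~\ref{thm:local}.
\item Integrating gives $W(f(g_2)-f(g_1))=\int_0^1 W J(\gamma(\tau))\,d\tau\;\xi$.
\item Rewriting this as $\xi+\int_0^1 (WJ(\gamma(\tau))-I_6)\,d\tau\;\xi$ bounds $\varepsilon$ by $\sup_K\|WJ-I_6\|\,\|\xi\|$.
\end{itemize}
In this formulation the bound is exact, and the $O(\|\xi\|^2)$ term only absorbs the mismatch if one prefers the Riemannian geodesic over the one-parameter subgroup.

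The ``iff'' has two directions:
\begin{itemize}
\item If $WJ\equiv I_6$, which is the constant $WJ(g_*)$, the first-order term vanishes.
\item Conversely, suppose the first-order error vanishes for all pairs. Take $g_2\to g_1$ along each direction $\xi$. This gives $WJ(g_1)\xi=\xi$ for every $\xi$, so $WJ(g_1)=I_6=WJ(g_*)$ for every $g_1$, and $WJ$ is constant.
\end{itemize}

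For \textbf{(b)}, note that $WJ(g_*)=I_6$. For any $g\in K$, join $g_*$ to $g$ by a minimizing geodesic $c$ in $K$ of length at most $\diam(K)$. Apply the mean value inequality to $\tau\mapsto WJ(c(\tau))$, whose derivative is $W\nabla_{\dot c}J$. This gives $\|WJ(g)-I_6\|\le \mathrm{length}(c)\cdot\sup_K\|W\nabla J\|$.

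For \textbf{(c)}, build $\varphi$ from the inverse of the embedding.
\begin{itemize}
\item Since $f$ is an injective immersion on the compact set $K$, it is an embedding, and $f(K)$ admits a tubular neighborhood $U$.
\item Define $\varphi$ on $U$ by $\varphi(y)=(\log(f^{-1}(\pi(y))),\,n(y))\in\R^6\times\R^{d-6}$. Here $\pi$ is the nearest-point projection onto $f(K)$ and $n(y)$ are normal-bundle coordinates; these are $C^1$ because $f$ is $C^2$.
\item This is a $C^1$ diffeomorphism onto its image, because $\log$ is a diffeomorphism near $I\in K$ for $K$ small, i.e.\ $R_K$ below the injectivity radius.
\item Take $W=[I_6\;0]$. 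Then $W(\varphi(f(g_2))-\varphi(f(g_1)))=\log g_2-\log g_1$.
\end{itemize}
The extrinsic Jacobian variation is now gone. What remains is the gap between $\log g_2-\log g_1$ and $\log(g_1^{-1}g_2)$.

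To bound that gap, write $X=\log g_1$ and $g_2=g_1\exp(\xi)$. Then
\[
\log g_2-\log g_1=(\mathrm{d}\exp_X)^{-1}\xi+O(\|\xi\|^2),
\]
using the left-trivialized differential, and
\[
(\mathrm{d}\exp_X)^{-1}=\frac{\mathrm{ad}_X}{1-e^{-\mathrm{ad}_X}}=I+\tfrac12\mathrm{ad}_X+O(\|X\|^2).
\]
So $\varepsilon_{\mathrm{Lie}}=\tfrac12[X,\xi]+O(\|X\|^2\|\xi\|)$. Using $\|[X,\xi]\|\le C_{\mathrm{Lie}}\|X\|\|\xi\|$ and $\|X\|\le R_K$ gives $C(R_K)=\tfrac12C_{\mathrm{Lie}}R_K+O(R_K^2)$. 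The vanishing of the rotational component for pure translations follows because $[X,(0,v)]$ has zero $\so(3)$ part.

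I expect the main obstacle to be the regularity bookkeeping in (c): making $\varphi$ a genuine $C^1$ diffeomorphism on an open set with only $f\in C^2$, and requiring $R_K$ small enough for $\log$ to be defined. I would handle this with the standard tubular neighborhood theorem, shrinking $U$ if needed, and by keeping careful track of the left- versus right-trivialization convention of $\mathrm{d}\exp$ so that the sign of the bracket term is consistent.
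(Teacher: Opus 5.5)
Your proposal is correct and follows essentially the same route as the paper: the fundamental-theorem-of-calculus integral along the curve $g_1\exp(\tau\xi)$ for (a), a mean-value bound along a minimizing geodesic from $g_*$ for (b), and for (c) the tubular-neighborhood chart $\log\circ f^{-1}\circ\pi$ with normal coordinates, $W=[I_6\;0]$, and the expansion of $(\mathrm{d}\exp_X)^{-1}$. Your version is slightly tighter in places: applying the mean-value inequality directly to $WJ(c(\tau))$ gives $\sup\|W\nabla J\|$ without the paper's ``absorb $\|W\|$'' step, you prove the converse of the ``iff'' in (a) explicitly, and using normal-bundle coordinates in $\R^{d-6}$ instead of $x-\pi(x)$ makes $\varphi$ a genuine map into $\R^d$.
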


\begin{proof}[Proof of~(a)]
Fix any~$W$ that is a left inverse of $J(g_*)$ for some $g_* \in K$.
By the fundamental theorem of calculus along a geodesic $\gamma$ from
$g_1$ to $g_2$ with body velocity
$\xi = \log(g_1^{-1} g_2)$:
\[
  f(g_2) - f(g_1)
  = \biggl(\int_0^1 J(\gamma(t))\, dt\biggr) \cdot \xi.
\]
Applying~$W$:
\[
  W \cdot \bigl(f(g_2) - f(g_1)\bigr)
  = \xi
    + \biggl(\int_0^1 \bigl(W \cdot J(\gamma(t)) - I_6\bigr)\, dt\biggr)
      \cdot \xi.
\]
The error term satisfies:
\[
  \biggl\|\int_0^1 \bigl(W \cdot J(\gamma(t)) - I_6\bigr)\,dt
    \cdot \xi\biggr\|
  \;\leq\;
  \sup_{g \in K} \|W \cdot J(g) - I_6\| \cdot \|\xi\|.
\]
The error vanishes to first order iff
$W \cdot J(g) = I_6$ for all $g \in K$.
\end{proof}

\begin{proof}[Proof of~(b)]
Since $W \cdot J(g_*) = I_6$, for any $g \in K$:
\[
  \|W \cdot J(g) - I_6\|
  = \|W \cdot (J(g) - J(g_*))\|
  \leq \|W\| \cdot \|J(g) - J(g_*)\|.
\]
By the mean value theorem on the Lie group,
$\|J(g) - J(g_*)\| \leq \sup_\gamma \|\nabla J\| \cdot d(g, g_*)$.
Taking the supremum over~$K$ and noting
$d(g, g_*) \leq \diam(K)$ gives the result
(absorbing $\|W\|$ into the $\nabla J$ term).
\end{proof}

\begin{proof}[Proof of~(c)]
Since $f$ is a $C^1$ immersion and globally injective on the compact set $K$, the image 
$\mathcal{M} = f(K)$ is an embedded 6-dimensional submanifold of $\R^d$. By the Tubular 
Neighborhood Theorem, there exists an open neighborhood $U$ of $\mathcal{M}$ and a smooth 
projection $\pi\colon U \to \mathcal{M}$. 

Define the Lie algebra chart around the identity $\psi(g) = \log(g) \in \se(3) \cong \R^6$. 
We first define a map on the manifold $\hat\varphi = \psi \circ f^{-1} \circ \pi \colon U \to \R^6$. 
To make this a full diffeomorphism onto its open image in $\R^d$, we append the normal bundle 
coordinates: $\varphi(x) = (\hat\varphi(x),\, x - \pi(x)) \in \R^6 \times \R^{d-6} \cong \R^d$.

Setting $W = [I_6 \mid 0]$, the adapter exactly evaluates differences in the absolute 
Lie algebra coordinates (linearizing around $0$):
\[
  W \cdot \bigl(\varphi(f(g_2)) - \varphi(f(g_1))\bigr)
  = \log(g_2) - \log(g_1).
\]
Let $X = \log(g_1)$ and let $\xi = \log(g_1^{-1} g_2)$ be the true relative body velocity. 
Observe the exact group composition: $g_2 = g_1 (g_1^{-1} g_2) = \exp(X)\exp(\xi)$. 
By the Baker--Campbell--Hausdorff (BCH) formula, the full linear-in-$\xi$
expansion is:
\[
  \log\bigl(\exp(X)\exp(\xi)\bigr) = X + (\mathrm{d}\exp_X)^{-1}(\xi) + O(\|\xi\|^2),
\]
where $(\mathrm{d}\exp_X)^{-1}(\xi) = \xi + \frac{1}{2}[X, \xi] 
+ \frac{1}{12}[X,[X,\xi]] + \cdots$ is the full BCH series in~$\xi$
(convergent for $\|X\| < 2\pi$).
Substituting this back, the adapter evaluates to:
\[
  \bigl(X + (\mathrm{d}\exp_X)^{-1}(\xi) + O(\|\xi\|^2)\bigr) - X 
  = \xi + \bigl((\mathrm{d}\exp_X)^{-1}(\xi) - \xi\bigr) + O(\|\xi\|^2).
\]
The residual first-order error is 
$\varepsilon_{\mathrm{Lie}} = (\mathrm{d}\exp_X)^{-1}(\xi) - \xi$,
whose leading term is $\frac{1}{2}[X, \xi]$, with higher-order corrections
$\frac{1}{12}[X,[X,\xi]] + \cdots$ that scale as $O(\|X\|^2 \|\xi\|)$.
Because $g_1 \in K$, we have $\|X\| = \|\log(g_1)\| \leq R_K$. 
Since the full series converges and each term is bounded by 
$O(R_K^n \|\xi\|)$, we obtain 
$\|\varepsilon_{\mathrm{Lie}}\| \leq C(R_K) \|\xi\|$
with $C(R_K) = \tfrac{1}{2} C_{\mathrm{Lie}} R_K + O(R_K^2)$,
yielding the result.
\end{proof}

\begin{remark}[Interpretation]~
\begin{itemize}
  \item \textbf{(a)} says that a linear readout~$W$ always exists,
    and its error is controlled by how much the Jacobian $J(g)$ varies
    over the trajectory.
    The \textbf{Jacobian variation is the fundamental obstruction} to
    linear geometric decodability.
  \item \textbf{(b)} says the obstruction is geometric:
    it is the \textbf{curvature} of the feature manifold
    $f(K) \subset \R^d$.
    A flat (affine) feature manifold gives perfect linear readout;
    curvature creates error.
  \item \textbf{(c)} says the nonlinear adapter~$\varphi$
    ``unrolls'' the curved feature manifold, removing the
    Jacobian-variation error from~(a)--(b).
    However, even after perfect unrolling, the Siamese difference
    $\varphi(z_2) - \varphi(z_1)$ incurs a residual from the
    Lie bracket $\tfrac{1}{2}[\xi, X]$, because subtraction in
    $\R^d$ cannot exactly mirror the non-Abelian group operation.
    This residual is $O(\|\xi\| \cdot R_K)$ and is
    an \emph{intrinsic property of $\SE(3)$}, not of the encoder.
    Importantly, the $\se(3)$ bracket satisfies
    $[(\omega_1,v_1),(\omega_2,v_2)]
      = (\omega_1{\times}\omega_2,\;
         \omega_1{\times}v_2 - \omega_2{\times}v_1)$.
    When both $X$ and $\xi$ are pure translations, the bracket
    vanishes entirely.
    When the relative displacement~$\xi$ is a pure translation
    ($\omega_\xi = 0$) but the base pose~$X = (\omega_X, v_X)$ is
    rotated relative to the coordinate origin, a cross-coupling
    term $\omega_X \times v_\xi$ remains; however, the rotational
    component of the bracket still vanishes.
    In general, the residual scales with the rotational extent
    of the region~$K$, and is small when the angular range~$R_K$
    is modest.
\end{itemize}
\end{remark}

\subsubsection{Rotation--Translation Asymmetry}

\begin{theorem}[Rotational Jacobian is Depth-Independent]
\label{thm:rot-trans}
Let $\mathcal{R}(g, s)$ be the rendering of scene~$s$ from
pose $g = (R, t) \in \SE(3)$, and let the encoder~$z$ compute any
weighted spatial average of local image features.
Decompose the Jacobian:
\[
  \mathrm{d}f_g
  = \begin{bmatrix} J_\omega(g) & J_v(g) \end{bmatrix}
  \in \R^{d \times 6},
\]
where $J_\omega = \partial f / \partial \omega$ (rotational) and
$J_v = \partial f / \partial v$ (translational).

\begin{enumerate}
\item[\textbf{(a)}] \textbf{Rotational Jacobian.}
The rotational optical flow at pixel $(u, v)$ is:
\[
  \frac{\partial}{\partial \omega}
  \begin{pmatrix} u \\ v \end{pmatrix}
  =
  \begin{pmatrix}
    -\frac{u v}{f_y} & f_x + \frac{u^2}{f_x}
      & -\frac{f_x v}{f_y} \\[4pt]
    -f_y - \frac{v^2}{f_y}
      & \frac{u v}{f_x} & \frac{f_y u}{f_x}
  \end{pmatrix}.
\]
This depends only on the pixel coordinates and focal lengths
$(f_x, f_y)$---\textbf{not on scene depth~$Z$}.

\item[\textbf{(b)}] \textbf{Translational Jacobian.}
The translational optical flow at pixel $(u, v)$ with depth~$Z$ is:
\[
  \frac{\partial}{\partial v}
  \begin{pmatrix} u \\ v \end{pmatrix}
  = \frac{1}{Z}
  \begin{pmatrix}
    f_x & 0   & -u \\
    0   & f_y & -v
  \end{pmatrix}.
\]
This depends on $Z^{-1}$, which varies across the image and
across scenes.

\item[\textbf{(c)}] \textbf{Consequence for Jacobian variation.}
$J_\omega(g)$ varies only through the encoder's nonlinearity
(how $z$ processes different image content), not through the geometry
of the flow itself.
$J_v(g)$ varies both through the encoder's nonlinearity \emph{and}
through the depth distribution $Z(u,v; g)$.
Since $J_v$ has a strictly larger set of variation sources than
$J_\omega$, the upper bound from Theorem~\ref{thm:global}(b) is
generically larger for translation.
This provides an analytical justification for the empirically observed
inequality:
\[
  \inf_{W_\omega}\sup_{g \in K} \|W_\omega \cdot J_\omega(g) - I_3\|
  \;\leq\;
  \inf_{W_v}\sup_{g \in K} \|W_v \cdot J_v(g) - I_3\|,
\]
with equality only when the scene has constant depth.
Note that this is a heuristic expectation based on the structure
of the upper bounds, not a strict deduction; the actual optimized
readout errors depend on the specific encoder and scene geometry.

\item[\textbf{(d)}] \textbf{Scale ambiguity.}
The translational flow $\partial I / \partial v \propto Z^{-1}$
is invariant under the rescaling
$(v, Z) \mapsto (\lambda v, \lambda Z)$.
Therefore, translation magnitude and depth are confounded:
no encoder operating on pixel intensities alone can disambiguate
$\|v\|$ from~$Z$.
Note that the \emph{local} Jacobian $J_v(g)$ for a fixed scene
generically has full column rank~3, since the depth map $Z(u,v;g)$
is fixed and any non-zero translation induces non-trivial optical flow.
However, scale ambiguity implies that no \emph{universal} linear readout
$W_v$ trained across scenes with varying depth distributions can
reliably recover translation magnitude---only the translation
direction (2~DoF) is universally recoverable without metric priors.
\end{enumerate}
\end{theorem}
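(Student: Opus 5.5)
The plan is to derive (a) and (b) from the standard pinhole motion-field (Longuet-Higgins--Prazdny) equations, then push them through the encoder by the chain rule to get (c), and read (d) off the form of (b).

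\emph{Steps (a) and (b).} We fix a camera with intrinsics $(f_x,f_y)$ and principal point at the origin. A world point has camera coordinates $X=(X_1,X_2,Z)$ and projects to $u=f_xX_1/Z$, $v=f_yX_2/Z$. Under an infinitesimal body motion $\xi=(\omega,v)$ the point moves in camera coordinates by $\dot X=-\omega\times X - v$. The sign convention will be fixed so that it matches the displayed matrices. Differentiating the projection by the quotient rule gives
\[
\dot u=\frac{f_x\dot X_1}{Z}-\frac{u\dot Z}{Z},\qquad
\dot v=\frac{f_y\dot X_2}{Z}-\frac{v\dot Z}{Z}.
\]
We then substitute $X_1=uZ/f_x$ and $X_2=vZ/f_y$. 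In the rotational part every term picks up one factor of $Z$ from $X$, which cancels the $1/Z$, so the coefficients depend only on $(u,v,f_x,f_y)$. This yields the matrix in (a). In the translational part, $\dot X=-v$ carries no factor of $Z$, so $1/Z$ survives and we obtain (b). These two steps are routine bookkeeping; the only care needed is matching the column ordering and signs to the stated matrices.

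\emph{Step (c).} We pass the flow through the encoder. By brightness constancy, $\partial I/\partial\xi=-\nabla I\cdot\partial(u,v)/\partial\xi$. Since $z$ is a weighted spatial average of local features, the chain rule gives
\[
J(g)=\sum_{p} w_p\,\mathrm{D}\ell_p\,\nabla I_p\,\frac{\partial(u,v)_p}{\partial\xi}.
\]
Splitting $\xi$ into $\omega$ and $v$ gives $J_\omega$ and $J_v$. In $J_\omega$ the flow factor is a fixed function of the pixel position, so its variation with $g$ enters only through the content terms $\mathrm{D}\ell_p\nabla I_p$. In $J_v$ there is the extra multiplicative factor $Z(u,v;g)^{-1}$. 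We then insert each block into the bound of Theorem~\ref{thm:global}(b), applied blockwise. The variation $\nabla J_v$ contains all the terms of the $\nabla J_\omega$-type (content variation) plus terms with $\nabla(Z^{-1})$. That is the sense in which the upper bound is generically larger. When $Z$ is constant, the extra terms vanish and the two structures coincide up to a fixed linear map, which is the equality case.

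This step is the main obstacle, because the displayed inequality between infima does not literally follow from an upper bound. I will present it as the statement says: a comparison of the upper bounds from Theorem~\ref{thm:global}(b), not a strict deduction. Rigor is not achievable here without extra assumptions on the encoder.

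\emph{Step (d).} The translational flow in (b) depends on $v$ and $Z$ only through $v/Z$. The rotational flow is independent of $Z$. Hence the image sequence, and therefore every encoder output, is identical under $(v,Z)\mapsto(\lambda v,\lambda Z)$, realized by uniformly scaling the scene and the trajectory. Any readout is therefore constant on these orbits. Consider two scenes related by the scaling: a single $W_v$ would have to return both $v$ and $\lambda v$ for identical features, which is impossible unless $v=0$. Only $v/\|v\|$ is invariant.

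For the remark on local full rank, we note that for a fixed scene $Z$ is fixed. The three columns $Z^{-1}(f_x,0)$, $Z^{-1}(0,f_y)$ and $Z^{-1}(-u,-v)$ are linearly independent as fields over pixels once at least two distinct pixels are seen. Full rank of $J_v$ then follows generically, assuming textured images so that $\nabla I$ does not annihilate them and the encoder is nondegenerate.
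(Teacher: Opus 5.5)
Your part (c) follows the paper's proof sketch. Both use the chain rule with a shared encoder factor $\partial z/\partial I$, a depth-free rotational flow against a $Z^{-1}$-weighted translational flow, and an explicit admission that the inequality between infima is only a comparison of upper bounds. The paper gives no derivation for (a), (b), (d) or the rank remark. Your additions there are correct:
\begin{itemize}
\item the Longuet-Higgins--Prazdny computation for (a) and (b);
\item the finite global-rescaling argument for (d), which is also more accurate than the paper's wording, since it is the flow $\tfrac{\partial I}{\partial v}\,v$ that is invariant, not the Jacobian $\partial I/\partial v$;
\item the two-pixel linear-independence check on the columns of $J_v$.
\end{itemize}

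One step is false: your justification of the equality case in (c). At constant $Z$ the translational flow fields span only $\{(af_x-cu,\;bf_y-cv)\}$, that is, constant plus radial fields scaled by $1/Z$. The rotational columns are not of this form. The $\omega_z$ column $(-f_xv/f_y,\;f_yu/f_x)$ is a curl field; its first component depends on $v$, which no field of the translational form allows. The $\omega_x,\omega_y$ columns contain $u^2,uv,v^2$ terms. So no fixed $A\in\R^{3\times 3}$ gives rotational flow $=$ translational flow $\cdot A$ pointwise. Constant depth therefore does not make the two blocks ``coincide up to a fixed linear map,'' and nothing yields equality of the infima. What constant depth does give is weaker: the $\nabla(Z^{-1})$ terms vanish, and the constant factor $Z^{-1}$ is absorbed into $W_v$, so both blocks vary only through image content. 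That is the only sense in which the paper's heuristic ``equality'' can be read, and you should state it in that form.

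A smaller point on signs. Your convention $\dot X=-\omega\times X-v$ is the correct one for the body-frame displacement $\log(g_1^{-1}g_2)$. With it you get exactly the negatives of both displayed matrices. The displayed ones correspond to $\dot X=\omega\times X+v$, i.e.\ $\xi\mapsto-\xi$. This is one global sign and changes nothing in (a)--(d), but you should say so explicitly rather than silently adjusting the convention to match.
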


\begin{proof}[Proof sketch of~(c)]
The chain rule gives
$J_\omega(g)
 = \frac{\partial z}{\partial I}\big|_{I_g}
   \cdot \frac{\partial I_g}{\partial \omega}$
and
$J_v(g)
 = \frac{\partial z}{\partial I}\big|_{I_g}
   \cdot \frac{\partial I_g}{\partial v}$.
Both share the encoder sensitivity factor $\partial z / \partial I$.
The rotational flow operator $\partial I_g / \partial \omega$ is
depth-independent by~(a); the translational flow operator
$\partial I_g / \partial v$ includes $Z^{-1}(u, v; g)$, which varies
with scene geometry.
This adds an extra source of variation to~$J_v$, so
$\|\nabla J_v\| \geq \|\nabla J_\omega\|$ generically.
\end{proof}

\subsection{Vision Encoder Training, Performance Analysis}
\label{sec:encoder_training}

To understand the necessary and sufficient conditions for spatial
structure to emerge in self-supervised features, we train DINOv2-B
from scratch on a single walking-tour video (Amsterdam WalkingTours,
${\sim}$20 minutes) with systematic ablations of the training recipe.
We evaluate 25 checkpoints spanning component ablations (removing
individual losses or crop strategies), a prototype count sweep
(64 to 65k), and multiple independent reruns of the same
65k-prototype baseline to measure training stochasticity.
All checkpoints are trained for 62{,}500 iterations with effective
batch size~192 ($3 \times$ A100, batch\_size\_per\_gpu\,$=$\,64),
except one checkpoint trained with batch size~128 ($2 \times$ A100).
Each checkpoint is evaluated on ScanNet (3~scenes, ${\sim}$1{,}200 frames)
using two complementary metrics across all 12 transformer layers:
\begin{itemize}
  \item \textbf{Mutual $k$-nn} (\textbf{M1}): a non-parametric measure of
    topological alignment between features and SE(3) camera poses,
    evaluated at frame stride~5 with oracle over layers.
  \item \textbf{Poincar\'e adapter $R^2$}: a parametric measure of
    linear geometric decodability. Following the optimal configuration
    found in our hyperparameter sweep, we use $s\in[10,40]$ range pairing
    with 30{,}000 pairs, a bottleneck dimension of 20, and train for 200 epochs
    averaging over 15 random seeds per checkpoint with oracle over layers.
\end{itemize}

Table~\ref{tab:ablation} reports both metrics. The central finding
is that \emph{topology is stable while geometry is fragile.}

\begin{table}[ht]
\centering
\small
\begin{tabular}{@{}llrrrrrr@{}}
  \toprule
  & & \multicolumn{1}{c}{Topology} & \multicolumn{5}{c}{Geometry (Poincar\'e $R^2$, oracle layer)} \\
  \cmidrule(lr){3-3}\cmidrule(lr){4-8}
  Checkpoint & Ablation & \textbf{M1} & Best L & Overall & Trans & Rot & $\pm\sigma$ \\
  \midrule
  Pretrained DINOv2-B & (142M images)
    & $0.37$ & L7 & $\mathbf{0.61}$ & $\mathbf{0.63}$ & $\mathbf{0.59}$ & $0.04$ \\
  \midrule
  \multicolumn{8}{@{}l}{\textit{Component ablations (all BS\,$=$\,192):}} \\[2pt]
  no\_ibot & $-$\,iBOT
    & $0.39$ & L7 & $0.30$ & $0.30$ & $0.30$ & $0.05$ \\
  no\_koleo & $-$\,KoLeo
    & $\mathbf{0.42}$ & L7 & $0.11$ & $-0.10$ & $0.33$ & $0.11$ \\
  no\_local & 0 local crops
    & $0.40$ & L0 & $0.30$ & $0.26$ & $0.34$ & $0.09$ \\
  no\_global & global scale $[1,1]$
    & $0.35$ & L1 & $0.27$ & $0.12$ & $0.41$ & $0.04$ \\
  no\_dino & $-$\,DINO loss
    & $\mathbf{0.24}$ & L0 & $0.31$ & $0.30$ & $0.32$ & $0.05$ \\
  geo\_regularized & $+$\,geo loss
    & $0.38$ & L10 & $0.26$ & $0.24$ & $0.27$ & $0.06$ \\
  \midrule
  \multicolumn{8}{@{}l}{\textit{Prototype count sweep (all BS\,$=$\,192):}} \\[2pt]
  4k protos & & $0.38$ & L6 & $\mathbf{0.47}$ & $0.42$ & $0.51$ & $0.05$ \\
  12k protos & & $0.39$ & L7 & $0.31$ & $0.37$ & $0.26$ & $0.06$ \\
  1k protos & & $0.39$ & L11 & $0.29$ & $0.34$ & $0.25$ & $0.05$ \\
  256 protos & & $0.38$ & L6 & $0.21$ & $0.17$ & $0.24$ & $0.03$ \\
  64 protos & & $0.38$ & L9 & $0.08$ & $0.00$ & $0.17$ & $0.10$ \\
  \midrule
  \multicolumn{8}{@{}l}{\textit{Reproducibility: identical 65k config, different runs:}} \\[2pt]
  65k\_rerun & & $0.40$ & L6 & $0.35$ & $0.17$ & $0.52$ & $0.04$ \\
  v2\_65k & & $0.38$ & L0 & $0.21$ & $0.11$ & $0.30$ & $0.03$ \\
  65k\_base & & $0.38$ & L3 & $0.15$ & $0.14$ & $0.15$ & $0.06$ \\
  65k\_rerun & & $0.38$ & L8 & $0.08$ & $-0.06$ & $0.23$ & $0.05$ \\
  \midrule
  (BS\,$=$\,128) & 65k, $2{\times}$A100
    & $\mathbf{0.15}$ & L3 & $0.11$ & $-0.04$ & $0.26$ & $0.03$ \\
  random weights & (no training)
    & $0.11$ & L8 & $-0.12$ & $-0.23$ & $-0.00$ & $0.03$ \\
  \bottomrule
\end{tabular}
\caption{\textbf{Training Ablation (ScanNet).}
Topology (mutual $k$-nn metric \textbf{M1}) is measured at stride\,$=$\,5, oracle over 12 layers.
Geometry ($R^2$) is the mean over 15 seeds at the oracle layer.
All checkpoints trained on Amsterdam WalkingTours for 62{,}500 iterations. Topology is stable across training recipes; geometry is fragile and depends critically on data scale.}
\label{tab:ablation}
\end{table}

\paragraph{Topology is stable; geometry is fragile.}
Mutual $k$-nn alignment is remarkably stable: excluding the
\texttt{no\_dino} and (BS\,$=$\,128) outliers, all trained checkpoints
achieve mutual $k$-nn~$\in [0.35, 0.42]$, within 15\% of each other and
comparable to the pretrained model ($0.37$).
In contrast, Poincar\'e adapter $R^2$ varies from $0.06$
to $0.47$ across the same checkpoints---a significantly larger
spread relative to the metric range.
This dissociation reveals that DINOv2 training reliably produces
features that are \emph{topologically} aligned with camera
pose, but the \emph{metric} quality of this alignment
(whether a linear adapter can quantitatively decode pose)
is far more sensitive to training details and stochasticity.

\paragraph{DINO self-distillation shapes depth representation.}
While \texttt{no\_dino} achieves a reasonable overall $R^2$ of $0.31$,
its best layer is \textbf{L0} (the patch embedding), and performance collapses in deeper layers.
This indicates that without the DINO objective, the network fails to build
abstract geometric representations in its deeper layers, defaulting to
low-level pixel correlations available at the input.
Furthermore, \texttt{no\_dino} is the sole ablation that
substantially degrades mutual $k$-nn ($0.24$ vs.\ $0.38$ baseline),
identifying self-distillation as a unique mechanism for topological alignment.

\paragraph{Batch size is critical for manifold formation.}
The (BS\,$=$\,128) checkpoint, trained with effective batch size~128
instead of~192, achieves mutual $k$-nn\,$=$\,$0.15$---near
random weights ($0.11$) and far below all BS\,$=$\,192
checkpoints (${\geq}\,0.35$).
This suggests a sharp phase transition in self-distillation
effectiveness as a function of batch diversity, where insufficient
contrastive signal prevents the formation of a pose-aligned manifold.

\paragraph{Geometry is stochastic even under identical configs.}
Four independent training runs of the same 65k-prototype
configuration yield Poincar\'e $R^2$ ranging from $0.08$
to $0.35$ (Table~\ref{tab:ablation}, reproducibility block),
with the oracle layer migrating from L0 to L8 across runs.
Their mutual $k$-nn scores, however, are nearly identical
($0.38 \pm 0.01$).
This confirms that the topology-to-geometry gap reflects intrinsic
training stochasticity: the feature manifold's global shape
is reproducible, but its local metric structure is not.

{\subsection{Full Latent Space Navigation Results}}
\label{app:full_navigation}

{Table~\ref{tab:navigation_results_full} provides the complete latent space navigation results including all corrector variants (Linear, MLP-2, MLP-3, Attention) for both DINOv2 and DUSt3R backbones. The condensed version in the main paper (Table~\ref{tab:navigation_results}) reports only the Identity baseline, Inv.\ Poincar\'e, and the Attention corrector.}

\begin{table}[ht]
\centering
\small
\resizebox{\textwidth}{!}{
\begin{tabular}{lccccccc}
\toprule
\textbf{Corrector} & \textbf{MSE ($\downarrow$)} & \textbf{$R^2$ ($\uparrow$)} & \textbf{Top-1 ($\uparrow$)} & \textbf{Hit@0.1 ($\uparrow$)} & \textbf{Hit@0.2 ($\uparrow$)} & \textbf{Hit@0.3 ($\uparrow$)} & \textbf{Hit@0.5 ($\uparrow$)} \\
\midrule
\multicolumn{8}{c}{\textbf{DINOv2 (Layer 7)}} \\
\midrule
Identity & 0.578 & $-$0.323 & 0.000 & 0.090 & 0.204 & 0.302 & 0.523 \\
Inv.\ Poincar\'e & 0.350 & 0.199 & 0.041 & \textbf{0.252} & \textbf{0.479} & \textbf{0.628} & \textbf{0.741} \\
Linear & 0.254 & 0.414 & 0.020 & 0.141 & 0.263 & 0.342 & 0.415 \\
MLP-2 & 0.280 & 0.355 & 0.029 & 0.185 & 0.342 & 0.410 & 0.473 \\
MLP-3 & \textbf{0.219} & \textbf{0.498} & \textbf{0.051} & 0.261 & 0.451 & 0.532 & 0.581 \\
Attention & 0.258 & 0.407 & 0.018 & 0.167 & 0.323 & 0.383 & 0.422 \\
\midrule
\multicolumn{8}{c}{\textbf{DUSt3R (Layer 16)}} \\
\midrule
Identity & 0.266 & 0.021 & 0.000 & 0.090 & 0.204 & 0.302 & 0.523 \\
Inv.\ Poincar\'e & 0.226 & 0.155 & \textbf{0.030} & \textbf{0.216} & \textbf{0.346} & \textbf{0.467} & \textbf{0.629} \\
Linear & 0.171 & 0.327 & 0.007 & 0.030 & 0.063 & 0.136 & 0.203 \\
MLP-2 & 0.149 & 0.429 & 0.016 & 0.156 & 0.279 & 0.369 & 0.439 \\
MLP-3 & 0.112 & 0.577 & 0.019 & 0.190 & 0.338 & 0.400 & 0.462 \\
Attention & \textbf{0.108} & \textbf{0.597} & 0.024 & 0.201 & 0.357 & 0.455 & 0.535 \\
\bottomrule
\end{tabular}
}
\vspace{1mm}
\caption{{\textbf{Full Latent Space Navigation Results (Averaged over 3 scenes).} We evaluate different methods for predicting the destination feature $\hat{g}_{t+s}$. The Identity baseline (no navigation) calibrates the task difficulty. The zero-shot Inv.\ Poincar\'e approach (using pseudo-inverse $W^\dagger$) dominates in topological retrieval at all scales, even though parametric correctors like MLP-3 achieve better regression metrics (MSE and $R^2$). The Hit@$\epsilon$ metric measures the fraction of retrievals where the $L_2$ error on the raw 6D twist vector (translation in meters, rotation in axis-angle radians) is below $\epsilon$.}}
\label{tab:navigation_results_full}
\end{table}

{\subsection{Cross-Room Navigation Results}}
\label{app:cross_room_nav}

{To rigorously test the generalizability of the learned geometric structures, we evaluated latent space navigation on a held-out set of 31 test rooms. A Poincaré adapter was trained on a separate set of rooms, and we evaluated navigation success using the Hit@0.1 metric. We compared a pure zero-shot inverse (using $W^\dagger$ directly on unseen rooms) against a Ridge-Refit approach (which fits a direct inverse mapping $\Delta P \rightarrow \Delta g$ on the test room). As shown in Table~\ref{tab:cross_room_nav}, while the zero-shot mapping succeeds in approximately 60\% of rooms (beating the Identity baseline), the Ridge-Refit correction pushes this to 90--94\% across all three major backbone families. This confirms that the geometric manifold is a robust, transferable property of the features.}

\begin{table}[ht]
\centering
\small
{
\begin{tabular}{lccc}
\toprule
\textbf{Backbone} & \textbf{Zero-Shot $>$ Identity} & \textbf{Ridge-Refit $>$ Identity} & \textbf{Hit Rate (Ridge-Refit)} \\
\midrule
DINOv2 & 19/30 (63\%) & \textbf{29/31 (94\%)} & 94\% \\
DUSt3R & 17/29 (59\%) & \textbf{26/29 (90\%)} & 90\% \\
V-JEPA 2.1 & 19/31 (61\%) & \textbf{29/31 (94\%)} & 94\% \\
\bottomrule
\end{tabular}
}
\vspace{1mm}
\caption{{\textbf{Cross-Room Latent Space Navigation.} Fraction of 31 held-out test rooms where navigation (Hit@0.1) outperforms the no-navigation Identity baseline. Ridge-Refit dramatically improves generalization, indicating that while the exact inverse mapping may shift between environments, the underlying vector space remains highly linear and navigable.}}
\label{tab:cross_room_nav}
\end{table}

\subsection{Mixture-of-Experts Poincar{\'e} Adapter}
\label{app:moe_architecture}

To capture the full geometry of complex scenes, we extend the single-chart Poincar{\'e} adapter to a Mixture-of-Experts (MoE) architecture with $K$ local charts. Each expert $k$ possesses its own value projection $W_V^k: \mathbb{R}^d \to \mathbb{R}^g$ and linear readout $W_R^k \in \mathbb{R}^{6 \times g}$. The prediction is a gated combination of expert outputs:

{$$\hat{y} = \sum_{k=1}^{K} \alpha_k(z_t) \cdot W_R^k \cdot (\varphi_k(z_{t+s}) - \varphi_k(z_t))$$}

{Gating is performed via \textit{prototype routing} on the source feature $z_t$ alone:}

{$$\alpha_k(z_t) = \text{softmax}\!\left(\frac{(W_{\text{route}} \, z_t) \cdot p_k}{\exp(\tau)}\right)$$}

{where $W_{\text{route}} \in \mathbb{R}^{r \times d}$ is a shared linear projection, $p_k \in \mathbb{R}^r$ are $K$ learnable prototype vectors, and $\tau$ is a learnable log-temperature scalar. The key design decisions are: (i)~routing depends only on $z_t$ (the observer's current position), not on $z_{t+s}$, enforcing that chart selection depends on \textit{where} the observer is rather than where it is going; (ii)~a shared routing projection reduces parameters by ${\sim}60\times$ compared to per-expert Q/K projections; and (iii)~a learnable temperature allows soft-to-hard expert selection during training, avoiding the ``uniform gating'' trap. For $K{=}1$, the routing layers are omitted entirely, ensuring exact numerical equivalence with the baseline single-chart adapter.}

{\subsection{Multi-Step Latent Space Navigation}}
\label{app:multi_step_nav}

{We provide qualitative examples of the open-loop Latent Space Navigation described in Section~\ref{sec:navigation_results}. In each case, given a source frame and a target frame, we compute the target pose displacement $\Delta P$ and subdivide it into 4 equal sub-steps. The trajectory is integrated entirely in the frozen latent space by recursively predicting the next feature. We then query the nearest-neighbor image from the test set. Because the navigation operates open-loop, drift can accumulate on long paths. Figure~\ref{fig:multi_step_comb} summarizes three navigation scenarios of increasing difficulty, with detailed strip visualizations shown in Figures~\ref{fig:multi_step_t1}, \ref{fig:multi_step_t2}, and~\ref{fig:multi_step_t3}.}

\begin{figure}[ht]
    \centering
    \includegraphics[width=0.9\linewidth]{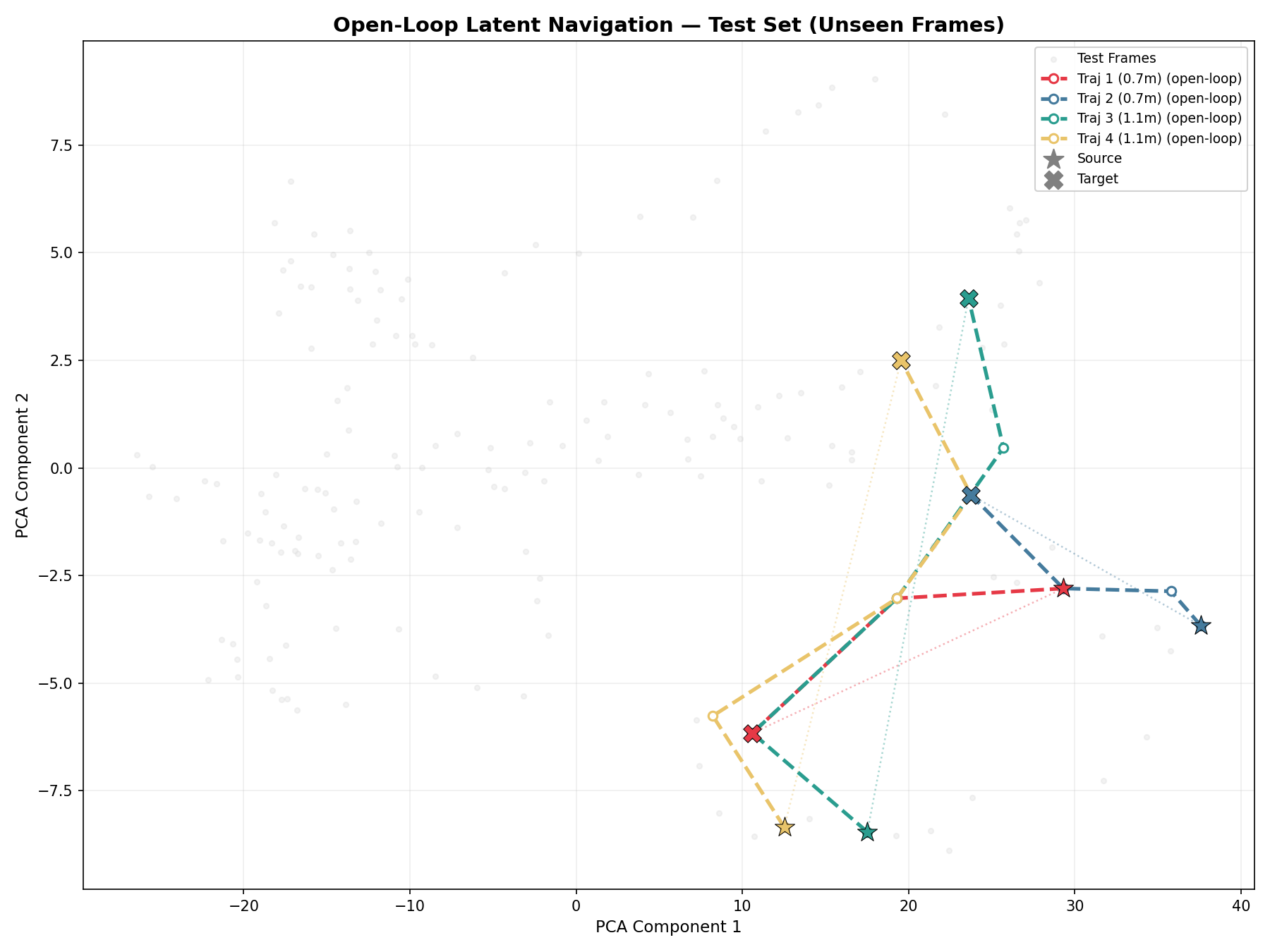}
    \caption{{\textbf{Multi-Step Navigation Trajectories.} Three trajectories of varying difficulty generated via open-loop navigation in the latent space. \textbf{Top ($\Delta=0.68\text{m}, 19^\circ$):} The planner successfully traverses the exact path with zero final NN retrieval error. \textbf{Middle ($\Delta=0.85\text{m}, 36^\circ$):} Moderate drift, but semantically consistent room traversal. \textbf{Bottom ($\Delta=2.45\text{m}, 37^\circ$):} Large translation creates drift, but rotation is recovered exceptionally well ($<5^\circ$ error).}}
    \label{fig:multi_step_comb}
\end{figure}

\begin{figure}[ht]
    \centering
    \includegraphics[width=\linewidth]{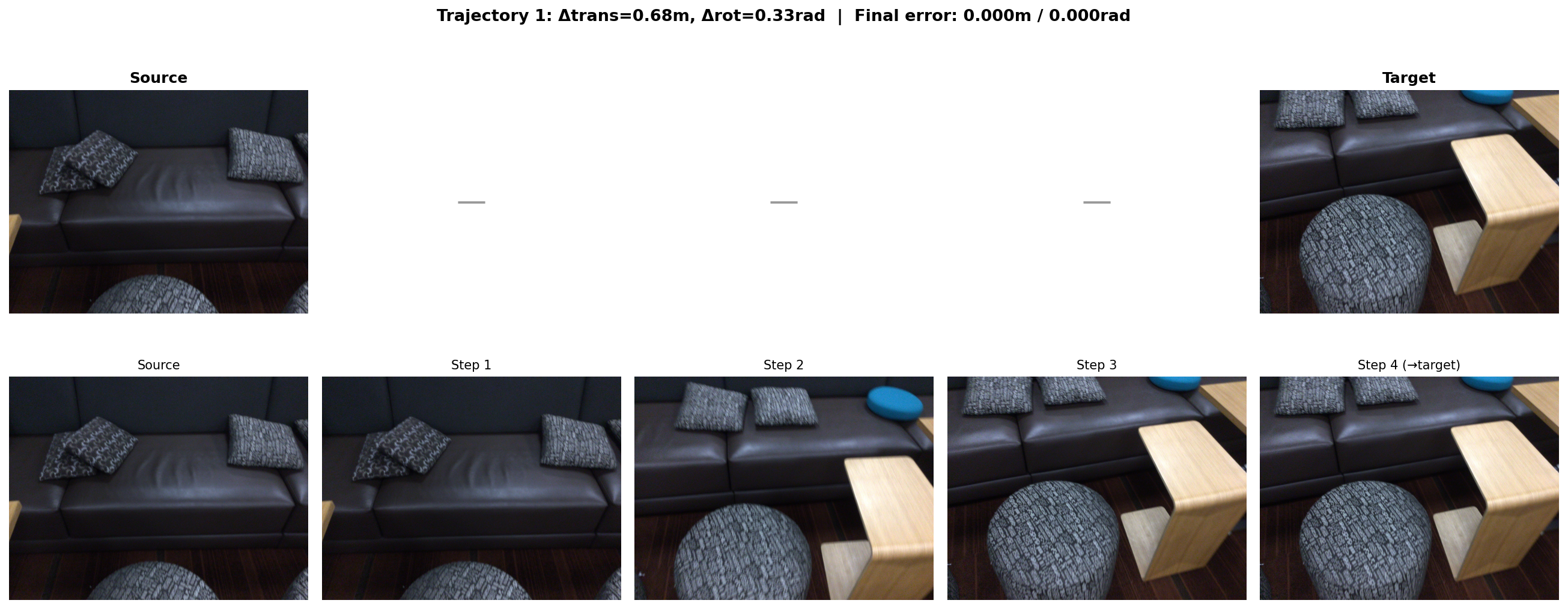}
    \caption{{\textbf{Trajectory 1 (Perfect Recovery).} $\Delta P = 0.68\text{m}, 19^\circ$ rotation. The final retrieved frame perfectly matches the target image. (Final error: 0.000m, 0.000rad).}}
    \label{fig:multi_step_t1}
\end{figure}

\begin{figure}[ht]
    \centering
    \includegraphics[width=\linewidth]{figures/traj1_openloop_strip.png}
    \caption{{\textbf{Trajectory 2 (Moderate Drift).} $\Delta P = 0.85\text{m}, 36^\circ$ rotation. The path drifts slightly due to the open-loop integration, but the retrieved frames remain visually coherent and approach the target view. (Final error: 0.387m, 0.140rad).}}
    \label{fig:multi_step_t2}
\end{figure}

\begin{figure}[ht]
    \centering
    \includegraphics[width=\linewidth]{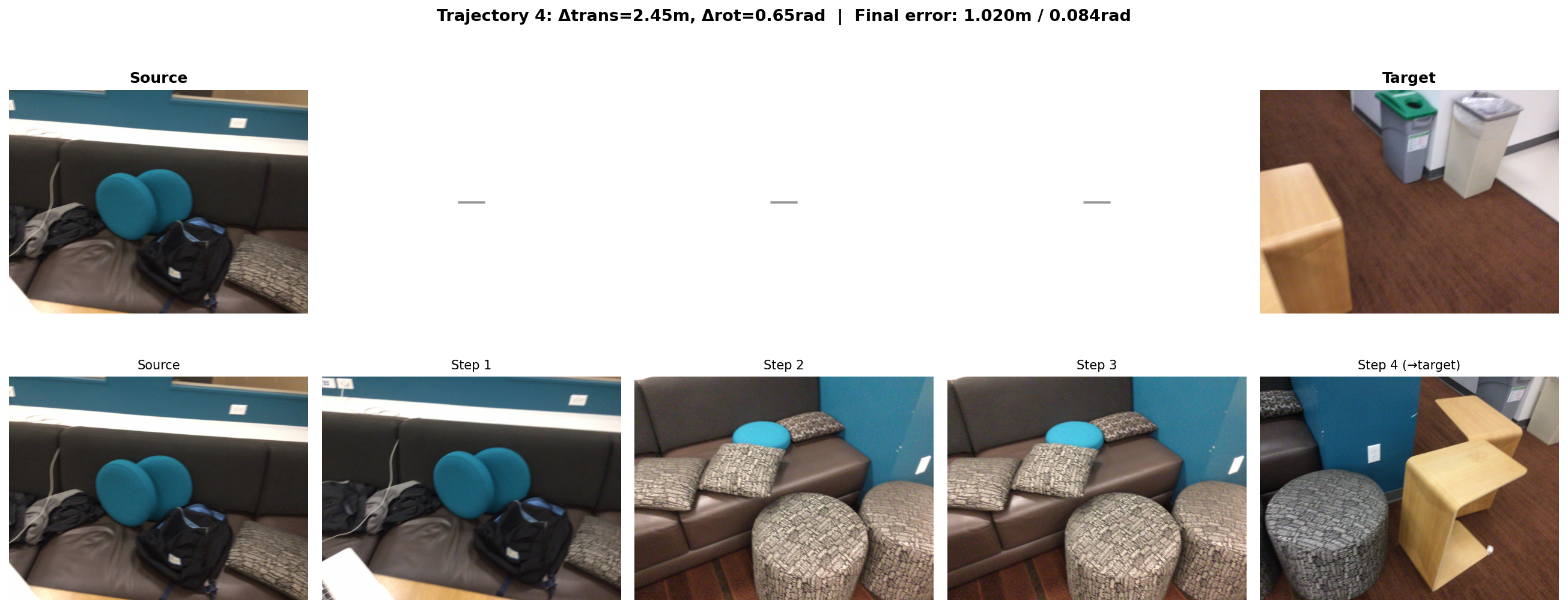}
    \caption{{\textbf{Trajectory 3 (Extreme Displacement).} $\Delta P = 2.45\text{m}, 37^\circ$ rotation. The largest displacement tests the limits of the open-loop linear model. While translation error accumulates, the rotation is extremely well recovered ($0.084\text{rad} \approx 5^\circ$). (Final error: 1.020m, 0.084rad).}}
    \label{fig:multi_step_t3}
\end{figure}



\end{document}